\documentclass{article}
\PassOptionsToPackage{numbers,sort&compress}{natbib}
\usepackage[preprint]{neurips_2026}

\usepackage[utf8]{inputenc}
\usepackage[T1]{fontenc}
\usepackage{hyperref}
\usepackage{url}
\usepackage{booktabs}
\usepackage{amsfonts}
\usepackage{amsmath,amssymb,amsthm}
\usepackage{mathtools}
\usepackage{nicefrac}
\usepackage{microtype}
\usepackage{xcolor}
\usepackage{multirow}
\usepackage{makecell}
\usepackage{enumerate}
\usepackage{enumitem}

\theoremstyle{plain}
\newtheorem{theorem}{Theorem}[section]

\theoremstyle{remark}

\usepackage{subfiles}
\usepackage{graphicx}

\usepackage{algorithm}
\usepackage{algpseudocode}
\usepackage{caption}
\newcommand\blfootnote[1]{%
  \begingroup
  \renewcommand\thefootnote{}\footnote{#1}%
  \addtocounter{footnote}{-1}%
  \endgroup
}

\usepackage{amsthm}
\newtheorem{assumption}{Assumption}[section]

\title{Divergence Decoding: Training-Free Capability Fusion}

\author{
Yimi Wang$^{1,\diamond}$ \quad
Hao Li$^{1,\diamond}$ \quad
Shuo Yang$^{1,\diamond}$ \quad
He Cao$^{2}$ \quad
Dechen Zhang$^{3}$ 
\\
Ziang Wu$^{1}$ \quad
Zhiyuan Yan$^{1}$ \quad
Fanyang Mo$^{1,\dagger}$ \quad
Li Yuan$^{1,\dagger}$
\\[1em]
\small
$^{1}$ Peking University
\quad
$^{2}$ International Digital Economy Academy (IDEA)
\quad
$^{3}$ The University of Hong Kong
\\
\small
yimiwang25, shuo\_yang@stu.pku.edu.cn, lihao1984, yuanli-ece@pku.edu.cn
}

\begin{document}

\maketitle

\begin{abstract}
  While large language models excel in reasoning, these generalists often lack knowledge for specialized scientific domains. Conversely, domain models~(specialists), while knowledgeable, suffer from specialization side-effects including diminished logic and reduced robustness.
  To address this dilemma, we introduce \textbf{\textit{Divergence Decoding}}, a training-free framework for capability fusion. It reconstructs the "draft-and-verify" skeleton of speculative decoding into an adaptive routing mechanism. The core is using Jensen-Shannon divergence to monitor the distributional disagreement between the two models at each token. When the specialist exhibits significant divergence, our method identifies it as a potential reasoning risk and instantaneously routes control to the generalist. This allows the dynamic injection of general reasoning while preserving domain expertise, achieving inference-time policy composition of the generalist and the specialist. 
  We evaluate \textit{Divergence Decoding} across diverse model families (Qwen and Llama series) on challenging scientific benchmarks (GPQA, ChemBench, and ChemCoTBench). Experimental results demonstrate that \textit{Divergence Decoding} outperforms both the domain-specialized and general-purpose models, effectively surpassing the performance of most single-model baseline. This suggests that \textit{Divergence Decoding} provides a general, training-free paradigm for fusing diverse LLM capabilities through adaptive inference-time collaboration. \blfootnote{$^\diamond$ Equal contributors, $^\dagger$ Corresponding Authors}

\end{abstract}

\section{Introduction}
The development of large language models (LLMs) has broadly followed two directions. General-purpose LLMs~\cite{bai2023qwen, qwen2024qwen2, yang2025qwen3, qwen3.5, touvron2023llama, grattafiori2024llama, touvron2023llama, jiang2023mistral, guo2025deepseek, Ouyang2022instructgpt, openai2023gpt4, claude2, claude3, geminiteam2023gemini, gemma} emphasize strong reasoning capabilities, often strengthened through large-scale reinforcement learning, and have demonstrated impressive performance on multi-step problem solving and logical consistency across diverse domains. In parallel, scientific-domain specialist LLMs focus on incorporating expert knowledge through continued pretraining or fine-tuning on discipline-specific corpora in chemistry, biology, and materials science~\cite{zhao2024chemdfm, zhao2025chemdfmr, li2026agentic, labrak2024biomistral,bai2025intern, zhang2024comprehensive, hu2025survey}. While these models excel at scientific terminology and specialized tasks, they often remain less robust than general-purpose models in broad reasoning and out-of-domain generalization.
This dichotomy creates a critical bottleneck in scientific reasoning, where a single response necessitates both expert-level knowledge and rigorous multi-step logic. Recent scientific benchmarks~\cite{rein2023gpqa,phan2025humanity,li2025beyond,wang2024mmlu,hendrycks2020measuring,auer2023sciqa,lu2024moleculeqa} highlight that even frontier models struggle to navigate this boundary, often failing either at the retrieval of domain knowledge or the maintenance of reasoning consistency. The complementary nature of these failure modes raises a question: \textit{Can we combine these disparate strengths at inference time to outperform the capabilities of individual models?}

Existing paradigms for multi-model collaboration fall short of this goal. \textit{Speculative decoding}~\cite{leviathan2023fastspeculative}, while structurally similar to our work, is strictly constrained by distributional invariance, aiming solely for acceleration. Conversely, model routing and cascades operate at the coarse query level, failing to exploit the dynamic, token-level interplay between expertise and reasoning. To bridge this gap, we propose \textbf{\textit{Divergence Decoding}}, a training-free framework for capability fusion between a science-specialized LLM and a general reasoning LLM. The specialized model serves as the default generator, while the general model monitors local predictive disagreement through the Jensen-Shannon~(JS) divergence between their next-token distributions. When the two models are locally consistent, we preserve the specialized model's token; when they diverge beyond a threshold, we reject it and fall back to the general reasoning model. In this way, the decoder adaptively integrates domain expertise and general reasoning at the token level, without training an additional router, jointly finetuning the models, or committing to either model globally.

\begin{figure}[t]
    \centering
    \includegraphics[width=1.0\linewidth]{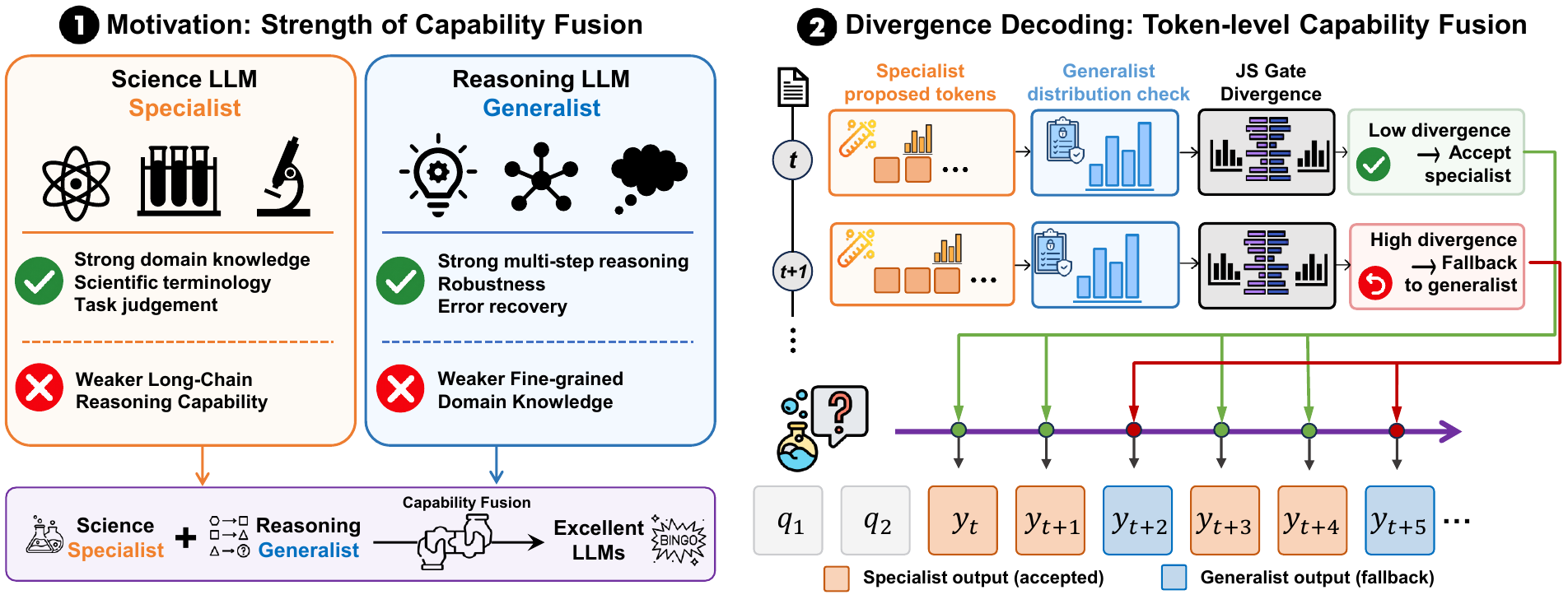}
    \vspace{-0.2in}
    \caption{
    \textbf{Illustration of the proposed Divergence Decoding framework.} Left: Scientific tasks often demand both fine-grained domain knowledge and long-chain reasoning, which are typically bifurcated between specialized and generalist LLMs. Right: Our method bridges this gap through a state-dependent routing policy. By measuring the distribution divergence between the two models at each step $t$, the decoder dynamically navigates between the \textcolor{orange!90!black}{specialist}’s expertise and the \textcolor{blue}{generalist}’s consistency without additional training or joint fine-tuning.
    }
    \vspace{-0.2in}
    \label{fig:placeholder}
\end{figure}

A key point is that our method should not be interpreted as a variant of distribution-preserving speculative decoding. The goal of speculative decoding is to reproduce the target model's distribution more efficiently~\cite{leviathan2023fastspeculative, chen2023accelerating}. Our goal is different: we seek a state-dependent routing policy whose output distribution is neither LLMs, but an adaptive composition of the two. Accordingly, the central theoretical question is not whether the combined trajectory matches any backbone model, but why such a routing policy can outperform either individual model. We answer this question through a theoretical analysis showing that JS divergence is not merely a heuristic disagreement score: under a structural view of complementary model errors, it provides a theoretically justified routing criterion for when the specialist should remain in control and when the generalist should intervene. In particular, low-divergence states can certify regimes where the specialist's domain-specific correction improves over the general model, while high-divergence states identify regimes where falling back to the general reasoning model reduces risk. 
This establishes that adaptive routing can outperform either constituent model at inference time, and reframes multi-model collaboration as risk-adaptive policy composition rather than acceleration.

We evaluate Divergence Decoding across diverse architectures, including Qwen and Llama-based pairs, on three authoritative benchmarks: ChemBench~\cite{mirza2024large}, ChemCoTBench~\cite{li2025beyond}, and GPQA~\cite{rein2023gpqa}. Experimental results demonstrate that our framework outperforms both standalone specialists and generalists in most tasks. Ablation studies confirm that JS divergence serves as a superior gating criterion compared to universal cross-vocabulary or log-prob divergence, providing the most robust signal for beneficial fallbacks. Mechanistic analysis further reveals that resampling primarily occurs in the initial 0\%-25\% of the token trajectory, where a distinct entropy shift identifies the boundary between confident expertise and logical uncertainty; notably, the resampled tokens are predominantly natural language connectors rather than domain entities, suggesting the Generalist's role in stabilizing the reasoning framework. Finally, Divergence Decoding maintains high inference efficiency with minimal overhead, offering significant practical utility for real-world scientific discovery.

To summarize, our work makes three contributions. 
\textbf{First}, we introduce a training-free token-level fusion mechanism that combines a science-specialized LLM and a general reasoning LLM during decoding using JS-based gating, with no router training, joint finetuning, or extra supervision. 
\textbf{Second}, we provide a theoretical analysis of this behavior as state-dependent cross-model routing rather than speculative sampling: the analysis shows how JS-based gating can make the combined policy outperform either constituent model when low-divergence states correspond to domain gains and high-divergence states correspond to safer general-model fallback. 
\textbf{Third}, we validate this approach across multiple scientific benchmarks and LLM-series, confirming the $A+B>A\ or\ B$ effect: the collaboration of two complementary large models can be stronger than either model used alone.
\section{Related Work}
\subsection{General reasoning LLMs and domain-specialized scientific LLMs}

Recent progress in large language models has advanced along two largely complementary directions~\cite{li2025decoupled,lv2025prollama,li2023weakly}. On one hand, general-purpose models have improved reasoning ability through large-scale post-training and reinforcement learning: for example, DeepSeek-R1~\cite{guo2025deepseek} demonstrates that RL can elicit stronger long-chain reasoning behaviors such as reflection, verification, and adaptive problem solving, and its distilled variants further transfer such capabilities to smaller dense models. 
On the other hand, domain-specialized scientific LLMs have focused on improving expert knowledge and reasoning in chemistry and biomedicine. ChemDFM~\cite{zhao2024chemdfm} shows that chemistry-focused pretraining and instruction tuning improve chemical understanding and dialogue, while ChemDFM-R~\cite{zhao2025chemdfmr} strengthens chemical reasoning through atomized chemical knowledge and reaction-centered supervision~. Chem-R~\citep{zhao2025developingchemr} extends this line with a multi-stage training recipe combining chemical foundation training, reasoning protocol distillation, and multi-task GRPO to induce more deliberative chemical reasoning. Beyond chemistry, TxGemma~\cite{wang2025txgemma} similarly highlights the promise of domain-oriented LLMs as interactive and explainable models for biomedical reasoning and prediction. 
These two lines of work, however, are typically studied in isolation: general models prioritize broad reasoning robustness, whereas domain models emphasize scientific competence. Our work is motivated by the hypothesis that these strengths are complementary and can be fused at inference time without additional training.

\subsection{Speculative decoding and token-level routing}
Our method is related to both speculative decoding and token-level expert routing. Classical speculative decoding and speculative sampling follow a draft-and-verify paradigm, where a smaller or faster model proposes tokens and a larger target model verifies them to exactly preserve the target distribution; variants such as Medusa~\cite{cai2024medusa} and EAGLE~\cite{li2024eagle,li2024eagle2} improve the drafting mechanism but retain the same acceleration-oriented objective. A separate line of work, including ETR~\cite{zhou2022mixture,chai2024expert}, CITER~\cite{zheng2025citer}, and FusionRoute~\cite{xiong2026token}, studies token-level routing across experts, typically using learned routers for efficiency or trainable coordination. In contrast, we do not aim to reproduce a fixed target model or rely on learned routing; instead, we combine two full-capability LLMs using a training-free rule based on the Jensen--Shannon divergence between their next-token distributions, so that observable disagreement determines whether to follow the specialist or the general model. This design is simple and architecture-agnostic, and is further motivated by evidence that sparse, high-impact token changes can drive reasoning gains~\cite{meng2026sparse}.
\section{Method}

\subsection{Problem Setting}
We consider two autoregressive language models: a {domain-specialized model} \(A\) and a {general reasoning model} \(B\). Model \(A\) is adapted to a target scientific domain through domain-specific pretraining, supervised fine-tuning, and/or preference optimization. Model \(B\) is a strong general-purpose reasoning model with broader inference robustness. Our goal is to combine these two models \emph{at inference time}, without any additional router training or joint finetuning.

We propose a token-level routing policy based on the Jensen--Shannon (JS) divergence between the two models' next-token predictive distributions. The key intuition is that low distributional divergence indicates local agreement, in which case the decoder trusts the domain-specialized model. Conversely, high divergence is treated as a warning signal, and the decoder defers to the general reasoning model to continue generation.

This objective differs from classical speculative decoding. Rather than producing a lossless approximation to a target model, our method defines a new adaptive decoding policy that switches between two models according to their local distributional disagreement.

\subsection{JS-Gated Block Drafting with Sequential Verification}

\begin{figure}[t]
    \centering
    \includegraphics[width=1.0\linewidth]{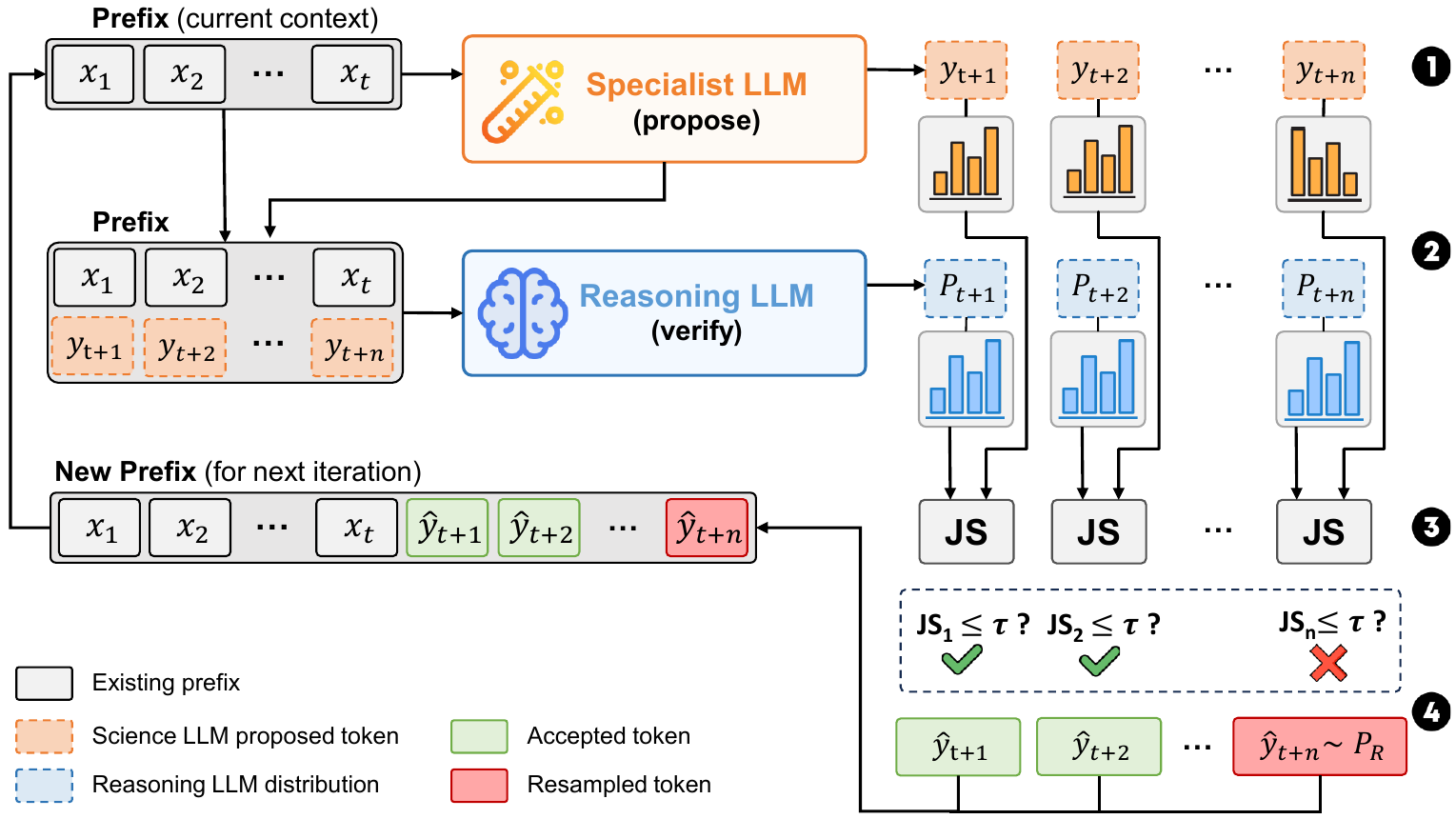}
    \vspace{-0.2in}
    \caption{\textbf{Divergence Decoding framework.} \textbf{(Step 1)} At each iteration, the domain-specialized model drafts a block of candidate tokens and records the corresponding next-token distributions. \textbf{(Step 2)}  The reasoning model then evaluates the same draft prefixes to obtain its predictive distributions.  \textbf{(Step 3)} The decoder computes the JS divergence between the two distributions at each position. \textbf{(Step 4)} If the divergence is below the threshold, the drafted token is accepted; otherwise, the decoder samples a token from the reasoning model and starts a new block.}
    \label{fig:methods}
\end{figure}

Autoregressive decoding traditionally generates tokens one at a time by computing next-token predictions from the current prefix. Our method, however, requires comparing two predictive distributions from two separate models. To avoid querying both models at every token, we implement JS-gated routing with a blockwise draft-then-verify procedure.

Let \(x\) denote the current generation prefix. At each decoding step, the domain-specialized model \(A\) and the reasoning model \(B\) define next-token distributions over the same prefix, and the routing decision is made according to the JS divergence between these two distributions.
Starting from prefix \(x\), model \(A\) drafts a block of \(n\) candidate tokens
$
\hat{\mathbf{y}}_{1:n} = (\hat{y}_1,\ldots,\hat{y}_n).
$
For each position \(1 \leq j \leq n\), define the corresponding draft prefix as
$
x^{(j)} = x \mathbin{\|} \hat{y}_{1:j-1},
x^{(1)} = x.
$
The two models then induce the next-token distributions
$
p_A^{(j)} = p_A(\cdot \mid x^{(j)}),
p_B^{(j)} = p_B(\cdot \mid x^{(j)}).
$
While model \(A\) produces these distributions during drafting, model \(B\) can compute \(p_B^{(1)},\ldots,p_B^{(n)}\) in a single teacher-forced forward pass over the drafted block.

For each position, the decoder computes the Jensen--Shannon divergence $s_j = D_{\mathrm{JS}}\bigl(p_A^{(j)}, p_B^{(j)}\bigr)$, where
\begin{equation*}
D_{\mathrm{JS}}(p,q)
= \frac{1}{2} KL(p \,\|\, m)
+ \frac{1}{2} KL(q \,\|\, m),
\
m = \frac{1}{2}(p+q).
\end{equation*}
We use JS divergence because it is symmetric, bounded, and compares the local predictive distributions rather than only the probability of a realized token, making it suitable for measuring model disagreement during decoding.

A single threshold \(\tau\) determines whether to trust model \(A\) or defer to model \(B\):
\[
g_j = \mathbf{1}[s_j > \tau],
\qquad
\pi_\tau^{(j)}(\cdot \mid x^{(j)})
=
(1-g_j)p_A^{(j)}
+
g_j p_B^{(j)}.
\]
Since \(g_j\) is deterministic, the decoder selects either \(p_A^{(j)}\) or \(p_B^{(j)}\) at each position rather than averaging the two distributions. If \(s_j \leq \tau\), the drafted token \(\hat{y}_j\) is accepted. Otherwise, a replacement token \(y' \sim p_B^{(j)}\) is appended, verification of the current block stops, and the remaining drafted tokens are discarded because they were generated under a prefix that no longer matches the actual output. This asymmetric rule makes model \(A\) the default generator, preserving domain-specific priors, while invoking model \(B\) only when distributional disagreement indicates potential unreliability. The full procedure is shown in Figure~\ref{fig:methods} and Algorithm~\ref{alg:block_drafting}.
In all experiments, both models use the same sampling configuration: temperature \(=1.0\), top-\(p=0.7\), and top-\(k=10\), ensuring that routing decisions are not confounded by model-specific decoding settings.
\begin{algorithm}[t]
\caption{Block Drafting with Sequential Verification}
\small
\label{alg:block_drafting}
\begin{algorithmic}[1]
\State \textbf{Input:} Prompt \(x_{1:t}\), block size \(n\), divergence threshold \(\tau\), maximum length \(T\)
\State \textbf{Output:} Generated sequence \(x\)

\State Initialize the current sequence: \(x \gets x_{1:t}\)

\While{\(|x| < T\) \textbf{and} \(\operatorname{last}(x) \neq \mathrm{EOS}\)}
    \State Store the current prefix: \(x^{(0)} \gets x\)

    \State Draft \(n\) candidate tokens with model \(A\):
    $
    (\hat{\mathbf{y}}_{1:n}, \mathcal{P}_A) \gets \mathrm{Draft}_A(x^{(0)}, n)
    $

    \State Verify the drafted block with model \(B\):
    $
    \mathcal{P}_B \gets \mathrm{Verify}_B(x^{(0)}, \hat{\mathbf{y}}_{1:n})
    $

    \For{\(j = 1\) \textbf{to} \(n\)}
        \State Compute distributional divergence:
        $
        s_j \gets D_{\mathrm{JS}}\bigl(p_A^{(j)}, p_B^{(j)}\bigr)
        $

        \If{\(s_j \leq \tau\)}
            \State Accept the drafted token: \(x \gets x \mathbin{\|} \hat{y}_j\)
        \Else
            \State Sample a replacement token from model \(B\): \(y' \sim p_B^{(j)}\)
            \State Append the replacement token: \(x \gets x \mathbin{\|} y'\)
            \State Stop verification of the current block.
            \State \textbf{break}
        \EndIf

        \If{\(|x| \geq T\) \textbf{or} \(\operatorname{last}(x) = \mathrm{EOS}\)}
            \State Stop generation.
            \State \textbf{break}
        \EndIf
    \EndFor
\EndWhile

\State \Return \(x\)
\end{algorithmic}
\end{algorithm}

\subsection{Theoretical Justification}
We provide a concise justification for why JS-gated routing can outperform using either model alone. Let \(r_t\) denote the oracle target next-token distribution at decoding state \(h_t\), i.e., the ideal conditional distribution that the decoder aims to approximate. We define the JS risk of model \(i\in\{A,B\}\) as
\[
\ell_i(t)=D_{\mathrm{JS}}(r_t,p_i(\cdot\mid h_t)),
\qquad
L(i)=\mathbb{E}[\ell_i(t)].
\]
For the routed policy \(\pi_\tau\), the risk is
\[
\ell_\tau(t)=(1-g_t)\ell_A(t)+g_t\ell_B(t),
\qquad
L(\tau)=\mathbb{E}[\ell_\tau(t)].
\]

The router uses the model disagreement
$
s_t=D_{\mathrm{JS}}\!\left(p_A(\cdot\mid h_t),p_B(\cdot\mid h_t)\right)$
as an observable reliability signal. We assume that the general-purpose model \(B\) is uniformly robust, with
$
\sqrt{D_{\mathrm{JS}}(p_B,r_t)}\le \epsilon_B,
$
while the specialized model \(A\) has a strict advantage in an in-domain subspace but suffers at least \(\epsilon_A\) error outside it. Under these assumptions, the metric property of \(\sqrt{D_{\mathrm{JS}}}\) gives two useful implications.

Intuitively, the JS score separates the states where each model is reliable. When \(s_t>\tau\), the condition \(\tau \ge 4\epsilon_B^2\) together with the \(\epsilon_B\)-robustness of \(B\) implies that \(A\) is farther from the target representation \(r_t\) than \(B\), so routing to \(B\) reduces risk in the high-disagreement region. Conversely, when \(s_t\le\tau\), the condition \(\epsilon_A > \sqrt{\tau}+\epsilon_B\) rules out the possibility that \(A\) is outside its in-domain subspace, since any such state would necessarily yield \(s_t>\tau\); thus low-disagreement states correspond to cases where the specialized model \(A\) is reliable and preferable. Therefore, if \(\tau \ge 4\epsilon_B^2\), \(\epsilon_A > \sqrt{\tau}+\epsilon_B\), and both routing regions occur with non-zero probability, the JS-gated policy strictly outperforms either model used alone:
\[
L(\tau)<\min\{L(A),L(B)\}.
\]
Thus, JS-gated routing uses the specialist in reliable low-disagreement states and falls back to the reasoning model when the specialist is likely to be unreliable. We provide full proof in Appendix~\ref{app:theory_proof}.
\section{Main Experiments on Scientific Benchmarks}
This section provides a systematic evaluation of Divergence Decoding across diverse scientific domains and model architectures. We first establish the effectiveness of our method on scientific reasoning benchmarks, including ChemCoTBench~\cite{li2025beyond}, ChemBench~\cite{mirza2024large}, and GPQA~\cite{rein2023gpqa}. Furthermore, we conduct ablation studies to justify the selection of JS divergence as our core metric. We also analyse the unique token-level patterns emergent during the resampling process and provide a quantitative analysis of inference latency to demonstrate the method’s computational efficiency.

\subsection{Performance in Chemistry Domain}
To evaluate the efficacy and generalizability of our proposed Divergence Decoding strategy, we conduct comprehensive experiments on two domain-specific benchmarks: ChemCoTBench and ChemBench. While ChemCoTBench focuses on specialized molecular editing and understanding through expert-annotated reasoning chains, ChemBench encompasses a broader chemical spectrum, ranging from organic synthesis to materials science. To verify whether Divergence Decoding consistently synergizes domain expertise with logical reasoning across diverse architectures, we investigate two distinct configurations: the Qwen backbone~(ChemDFM-R paired with R1-Distill-Qwen-32B), and the Llama backbone~(ChemR combined with R1-Distill-LLaMA-70B). This dual-backbone setup allows us to assess the strategy's cross-modal alignment performance across varying model scales and pretraining paradigms.

\begin{table}[!ht]
  \caption{Performance on molecule understanding and editing tasks in ChemCoTBench. For accuracy and Tanimoto similarity, higher is better ($\uparrow$); for MAE, lower is better ($\downarrow$).}
  \label{tab:chem_understanding_edit}
  \centering
  \setlength{\tabcolsep}{2pt}
  \small
  \renewcommand{\arraystretch}{1.02}
  \setlength{\tabcolsep}{2.3mm}

  \begin{tabular}{l|cc|cc|c|ccc}
    \toprule
    \multirow{2}{*}{Models}
      & \multicolumn{2}{c|}{Func-Group}
      & \multicolumn{2}{c|}{Scaffold}
      & \multicolumn{1}{c|}{SMILES}
      & \multicolumn{3}{c}{Molecule-Edit} \\
    \cmidrule(r){2-3} \cmidrule(r){4-5} \cmidrule(r){6-6} \cmidrule(r){7-9}
      & FG$\downarrow$
      & Ring$\downarrow$
      & Murcko$\uparrow$
      & Ring-sys$\uparrow$
      & Eq.$\uparrow$
      & Add
      & Delete
      & Sub \\
    \midrule
    \noalign{\vspace{-0.2em}}  
    \multicolumn{9}{c}{\textbf{Backbone: Qwen}} \\
    \noalign{\vspace{-0.2em}}  
    \midrule
    R1-distill-Qwen-32B
      & 0.21 & 1.05 & 0.16 & 0.67 & 0.63
      & 45 & 70 & 28 \\

    ChemDFM-R
      & 0.24 & 0.80 & 0.88 & 0.58 & \textbf{0.86}
      & 40 & 60 & 56 \\

    \makecell[l]{Divergence Decoding}
      & \textbf{0.17} & \textbf{0.55} & \textbf{0.90} & \textbf{0.70} & 0.85
      & \textbf{70} & \textbf{80} & \textbf{66} \\
      
    \midrule
    \noalign{\vspace{-0.2em}}  
    \multicolumn{9}{c}{\textbf{Backbone: Llama}} \\
    \noalign{\vspace{-0.2em}}  
    \midrule
    R1-distill-Llama-70B
      & 0.25 & 0.85 & 0.22 & \textbf{0.65} & \textbf{0.61}
      & 40 & {80} & 51 \\

    Chem-R
      & 0.14 & \textbf{0.30} & 0.58 & 0.62 & 0.54
      & {84} & 70 & \textbf{63} \\

    \makecell[l]{Divergence Decoding}
      & \textbf{0.10} & 0.45 & \textbf{0.61} & \textbf{0.65} & 0.55
      & \textbf{85} & \textbf{80} & 58 \\

    \bottomrule
  \end{tabular}
    \vspace{-6pt}
\end{table}

\begin{table}[!ht]
  \caption{Performance on different chemistry-related datasets for Qwen and Llama backbone methods.}
  \label{tab:chem-benchmark}
  \centering
  \setlength{\tabcolsep}{2pt}
  \small
  \resizebox{\textwidth}{!}{
  \begin{tabular}{lccccccccc}
    \toprule
    Method & \makecell{analytical\\chemistry} & \makecell{chemical\\preference} & \makecell{general\\chemistry} & \makecell{inorganic\\chemistry} & \makecell{materials\\science} & \makecell{organic\\chemistry} & \makecell{physical\\chemistry} & \makecell{technical\\chemistry} & \makecell{toxicity\\and safety} \\
    \midrule

    \multicolumn{10}{l}{\textbf{Backbone: Qwen}} \\
    \cmidrule(lr){1-10}
    R1-distill-Qwen-32B & 0.45 & 0.56 & 0.68 & 0.73 & 0.57 & 0.65 & 0.73 & \textbf{0.70} & 0.37 \\
    ChemDFM-R     & 0.40 & 0.56 & 0.69 & 0.61 & \textbf{0.60} & 0.69 & 0.55 & 0.65 & 0.37 \\
    Divergence Decoding      & \textbf{0.46} & \textbf{0.59} & \textbf{0.74} & \textbf{0.77} & 0.57 & \textbf{0.71} & \textbf{0.75} & 0.69 & \textbf{0.38} \\
    
    \addlinespace[2pt]
    \cmidrule(lr){1-10}
    \multicolumn{10}{l}{\textbf{Backbone: Llama}} \\
    \cmidrule(lr){1-10}
    R1-distill-Llama-70B      & 0.43 & 0.53 & 0.75 & 0.74 & 0.65 & 0.71 & \textbf{0.76} & 0.64 & 0.40 \\
    Chem-R          & 0.28 & 0.50 & 0.22 & 0.35 & 0.36 & 0.43 & 0.21 & 0.34 & 0.18 \\
    Divergence Decoding      & \textbf{0.48} & \textbf{0.55} & \textbf{0.77} & \textbf{0.77} & \textbf{0.69} & \textbf{0.75} & 0.74 & \textbf{0.68} & \textbf{0.41} \\

    \bottomrule
  \end{tabular}
  }
  \vspace{-6pt}
\end{table}

Experimental results in Table.~\ref{tab:chem_understanding_edit} and Table.~\ref{tab:chem-benchmark} indicate that Divergence Decoding outperforms individual constituent models across most tasks, establishing a new performance upper bound. A pivotal observation is the emergent synergy: by bridging the domain-specialized model with a reasoning-intensive counterpart, the collaborative framework not only mitigates the specialized model's logical deficiencies but also surpasses the performance of the standalone reasoning model. This demonstrates that our method effectively leverages the divergence between models to achieve a more precise and grounded chemical reasoning.

\subsection{Performance in Multiple Science Domains}

To comprehensively evaluate our divergence decoding method across various scientific domains, we apply the GPQA-diamond, a challenging benchmark consisting of expert-level questions across domains such as physics, chemistry, and biology. Solving these questions requires the combination of scientific knowledge, problem understanding, and multi-step reasoning. For each discipline, we select a corresponding domain-specialized expert model~(ChemDFM-R, TxGemma, and Raman-1.7B~\cite{raman01}) to provide scientific knowledge, while using the same R1-Distill-Qwen-32B model to guide the reasoning process. Three combinations are tested on domain tasks, respectively. The experimental results show that inference with our divergence decoding strategy, which combines a domain model with a reasoning model, consistently outperforms using either model alone. This trend holds across different scientific domains, suggesting that our method can effectively leverage complementary strengths from domain expertise and reasoning-oriented models.

\begin{table}[t]
  \centering
  \vspace{-0.2in}
  \caption{GPQA accuracy (\%) on chemistry, biology, and physics.}
  \label{tab:gpqa}
  \small
  \setlength{\tabcolsep}{5pt}
  \renewcommand{\arraystretch}{1.1}
  \begin{tabular}{l l c l c c}
    \toprule
    Domain & Science model & Acc. & Target model & Acc. & Divergence Decoding\\
    \midrule
    Chemistry & ChemDFM-R     & 15.28 & R1-distill-Qwen-32B & 23.61 & \textbf{37.50} \\
    Biology   & TxGemma       & 34.72 & R1-distill-Qwen-32B & 52.63 & \textbf{63.16} \\
    Physics   & Raman-1.7B    & 26.74 & R1-distill-Qwen-32B & 61.62 & \textbf{70.93} \\
    \bottomrule
  \end{tabular}
  \vspace{-6pt}
\end{table}

\begin{figure}[!ht]
    \centering
    \includegraphics[width=1.0\linewidth]{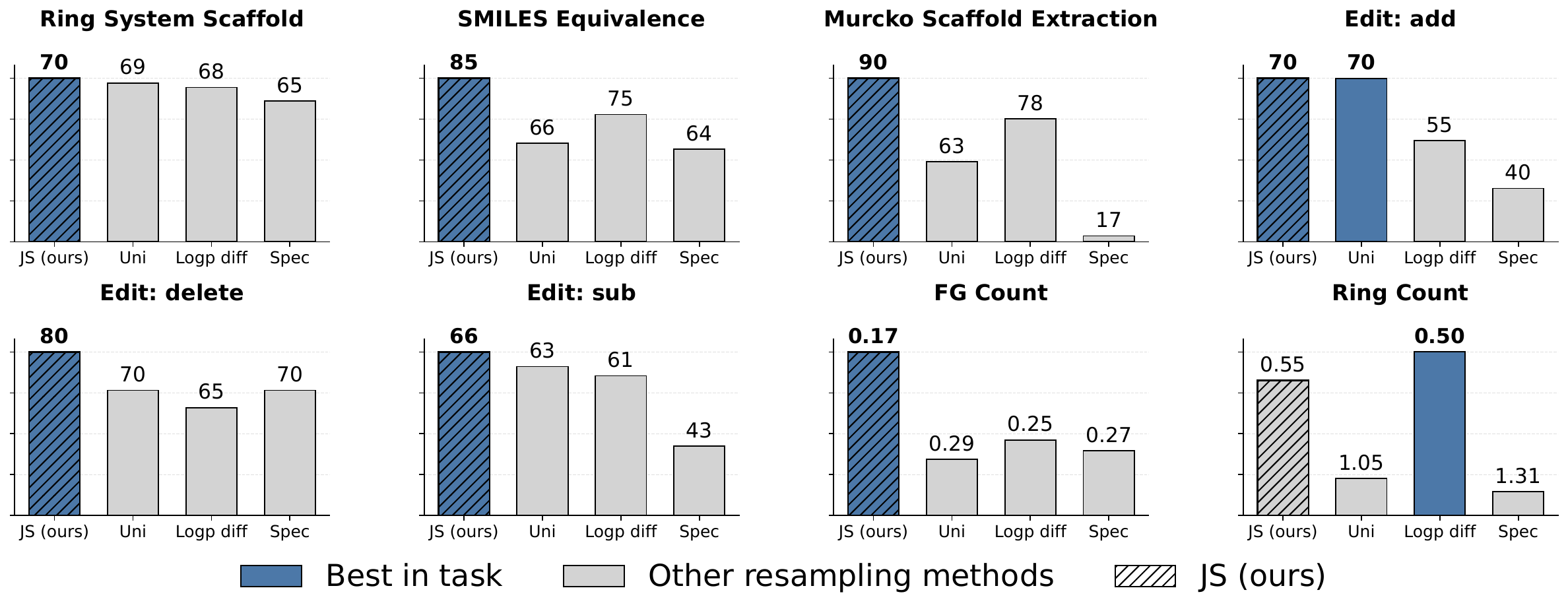}
    \captionsetup{skip=2pt}
    \caption{Comparison of different divergence methods during Divergence Decoding.}
    \label{fig:ablation-signal}
\end{figure}

\subsection{Ablation Study}
The ablation experiments are based on the ChemDFM-R~(specialist) and R1-distill-Qwen-32B~(generalist) on ChemCoTBench. We focus on (1) Divergence Type Comparison; (2) Token analysis during the resampling process in divergence decoding; (3) Hyperparameter analysis.

\textbf{Divergence Type Comparison:} We first compare different divergence types in the resampling stage, including universal cross-vocabulary divergence~\cite{patino2025}, top-10 vocabulary JS divergence, and single-token log-probability difference. These three signals all measure the discrepancy between the predictive distributions of two models at a given decoding position, but differ in the amount of distributional information they exploit. Specifically, \textit{universal cross-vocabulary JS divergence} uses the full vocabulary distribution, retaining the richest discrepancy information while introducing noise and computational cost; \textit{top-10 vocabulary JS divergence} approximates this discrepancy using only the most likely candidate tokens; and \textit{single-token log-probability difference} provides the most limited signal. In Figure.~\ref{fig:ablation-signal}, all three divergence types consistently outperform standard speculative sampling, whose objective is to obtain an unbiased estimate of the target model distribution. This suggests that explicitly leveraging inter-model disagreement can improve decoding quality. 

In Figure.~\ref{fig:ablation-signal}, our divergence decoding using top-10 JS-divergence achieves the best performance on most tasks among all these divergence types.
Overall, these results lead to two observations. First, incorporating divergence types generally improves model performance. Second, top-10 JS divergence provides a favorable trade-off between effectiveness and efficiency, as it is competitive with, and often superior to, full-vocabulary divergence while requiring substantially less computation.

\begin{figure}[!ht]
    \centering
    \includegraphics[width=1.\linewidth]{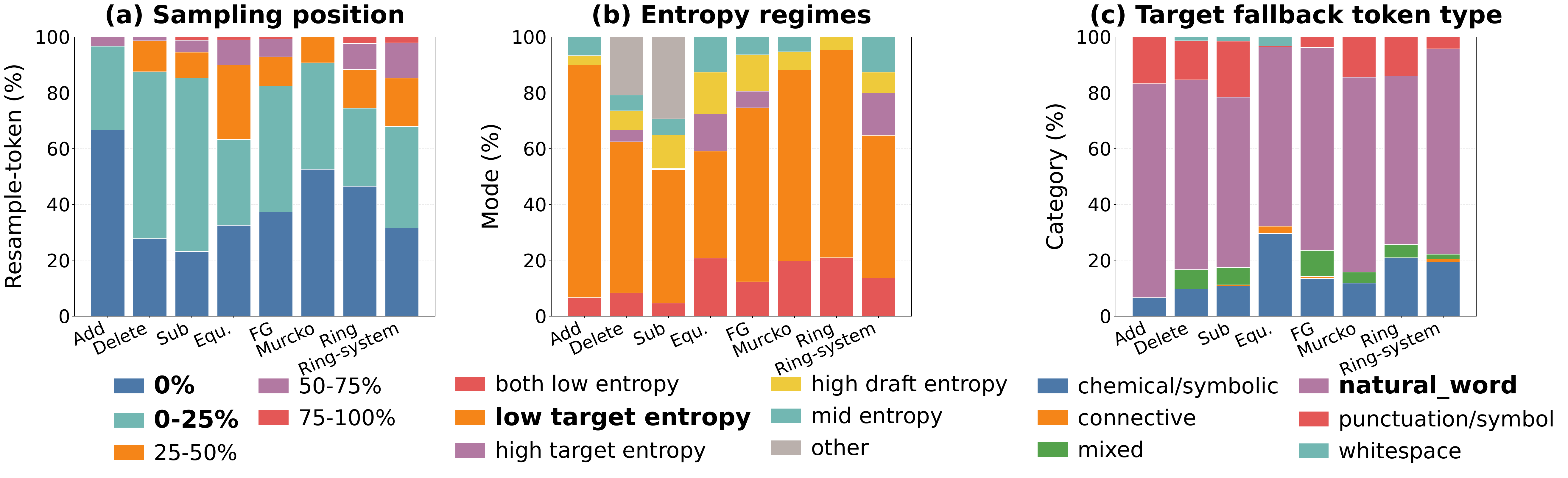}
    \captionsetup{skip=2pt}
    \caption{\textbf{\textit{Resampling Analysis:}} Resampling patterns by position, uncertainty, and token type.}
    \label{fig:resample-study}
\end{figure}

\textbf{Token Analysis during Resampling:} In Figure.~\ref{fig:resample-study}, we further analyze the resampling tokens to investigate how divergence-based decoding affects the reasoning process. First, effective resampling events are highly concentrated in the early stage of the reasoning trajectory, with approximately 77\% occurring at the initial position~(0\%-25\%) of the generated reasoning sequence. This suggests that resampling primarily intervenes during the formation of the reasoning direction, where early errors in identifying functional groups, ring scaffolds, or structural equivalence may otherwise propagate throughout the entire trajectory. Second, the dominant uncertainty regime corresponds to high JS divergence and low entropy of the reasoning LLM, indicating that resampling is mostly triggered by confident disagreement from the reasoning model rather than by reasoning-side uncertainty. The correction signal often comes from a sharp predictive distribution that decisively redirects the domain model. Token-type analysis further shows that the injected fallback tokens are mainly general reasoning words, with a smaller but non-negligible fraction of chemical or symbolic tokens. This implies that resampling adjusts both the global reasoning path and key domain-specific anchors.

\begin{figure}[!ht]
    \centering
    \includegraphics[width=\linewidth]{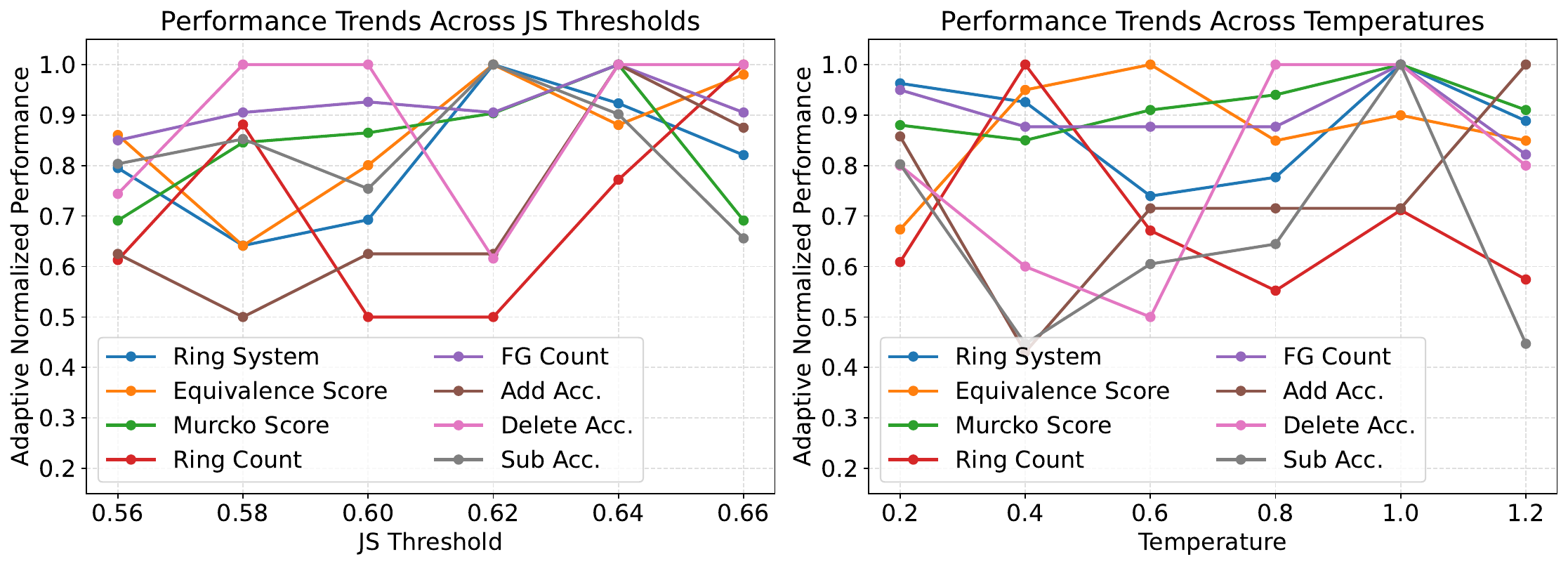}
    \captionsetup{skip=2pt}
    \caption{\textbf{\textit{Sensitivity Analysis:}}For visualization only, metrics with different scales and optimization directions are transformed into task-wise normalized performance scores(higher-is-better). The y-axis reflects normalized relative trends.}
    \label{fig:sensitivity}
\end{figure}

\textbf{Hyperparameter Analysis:} We also conduct a sensitivity analysis on key hyperparameters. The results show that both the JS-divergence threshold and the temperature affect downstream performance. The JS-divergence threshold directly controls the frequency with which reasoning-model tokens are injected, while the temperature changes the sharpness of the predictive distributions, thereby influencing the computed JS divergence and the resulting resampling decisions. Overall, injecting reasoning tokens under a broad range of JS-divergence thresholds brings consistent gains by facilitating the integration of domain knowledge and reasoning ability. Detailed number is presented in Appendix \ref{tab:chem_understanding_edit_temperature}.

\textbf{Efficiency Analysis.}
We also compare the end-to-end generation time on ChemCoTBench. As shown in Table~\ref{tab:chem_time_compressed}, Divergence Decoding achieves faster inference on Mol-Edit tasks but is slower on Mol-Und tasks, resulting in a slight overall latency increase. This task-dependent behavior may stem from different levels of agreement between the specialist and reasoning models: when their predictions align, blockwise drafting and early token acceptance reduce decoding cost; otherwise, frequent fallback to verification interrupts drafting and increases latency. Overall, Divergence Decoding maintains comparable efficiency while enabling stronger capability fusion.

\begin{table}[!ht]
  \centering
  \vspace{-0.2in}
  \caption{Average end-to-end generation time comparison on ChemCoTBench.}
  \label{tab:chem_time_compressed}
  \setlength{\tabcolsep}{5pt}
  \small
  \begin{tabular}{l|ccc}
    \toprule
    Methods
      & Mol-Und Avg.$\downarrow$
      & Mol-Edit Avg.$\downarrow$
      & Overall Avg.$\downarrow$ \\
    \midrule
    Single LLM
      & \textbf{189.02}
      & 218.74
      & \textbf{200.16} \\
    Diver. Decode
      & 233.04
      & \textbf{169.43}
      & 209.19 \\
    \midrule
    Single / DD
      & 0.81$\times$
      & \textbf{1.29$\times$}
      & 0.96$\times$ \\
    \bottomrule
  \end{tabular}
  \vspace{-6pt}
\end{table}

\textbf{Case Study from ChemCoTBench.}
We provide a case study to illustrate how reasoning-model token injection influences the model's early focus and guides the decoding process toward a chemically correct final answer. With the reasoning guidance of R1-Distill Qwen, the specialist model~(ChemDFM) is able to generate high-quality reasoning trajectories with grounded scientific knowledge. More case studies are provided in the Appendix~\ref{app-casestudy}.

\begin{figure}[t]
    \centering
    \includegraphics[width=0.95\linewidth]{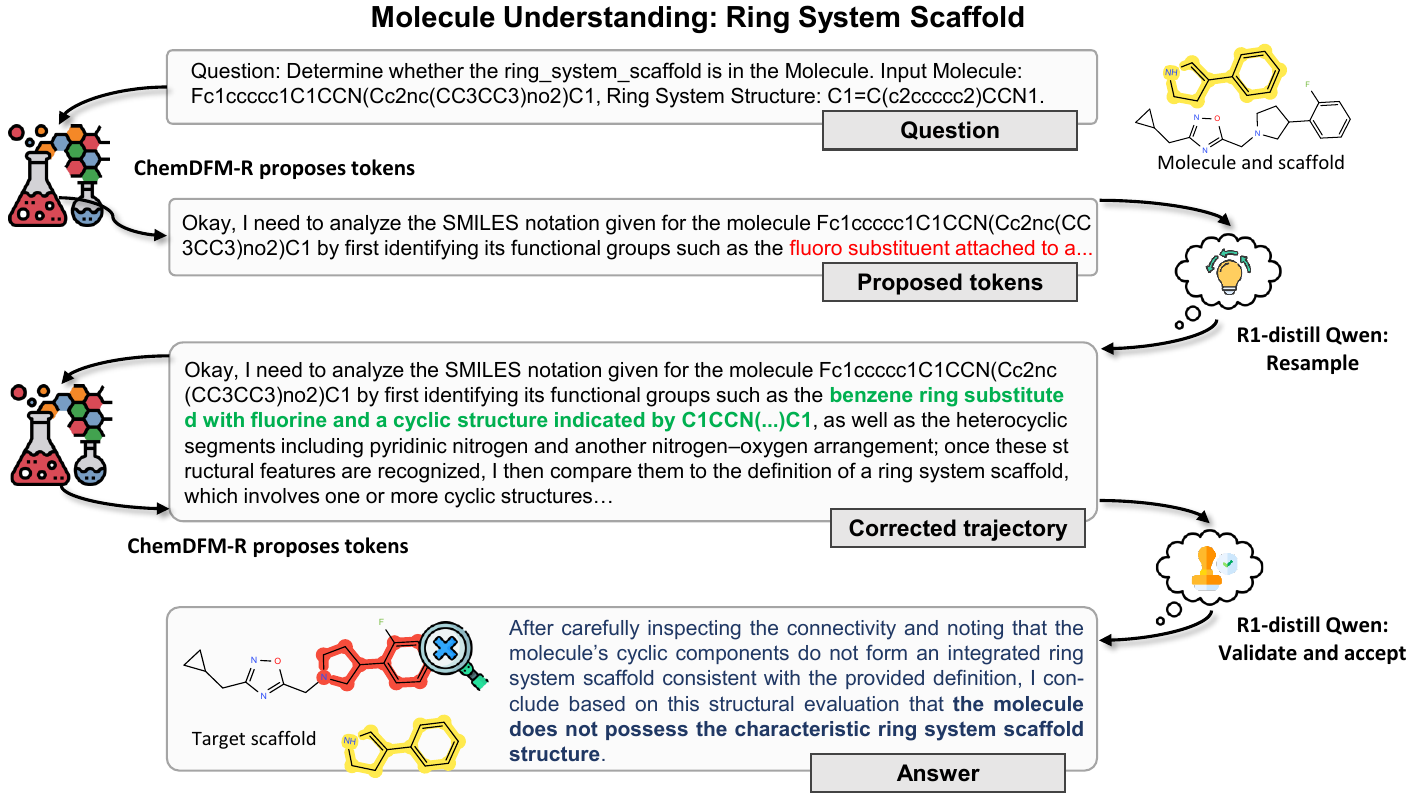}
    \captionsetup{skip=2pt}
    \caption{\textbf{\textit{Case study:}} the divergence decoding between ChemDFM and R1-Distill Qwen affects the reasoning process in the molecule understanding task.}
    \label{fig:casestudy}
\end{figure}

\section{Conclusion}
This paper presents \textit{Divergence Decoding}, a training-free inference-time framework designed to fuse the expertise of domain-specialized LLMs with the logical rigor of general reasoning models. By formulating decoding as a state-dependent routing process, our method adaptively transfers control based on $JS$-divergence, enabling precise token-level integration without the need for additional fine-tuning or supervision. Experimental results across chemistry-centric benchmarks and broader scientific tasks demonstrate that Divergence Decoding surpasses its constituent models in most situations, yielding a synergistic effect where domain knowledge and reasoning capabilities are effectively composed at inference time. Our analyses further reveal that selective, early-stage interventions are pivotal for steering generations toward reliable scientific trajectories. Ultimately, divergence-based adaptive decoding offers a robust and scalable paradigm for constructing high-performance scientific AI systems from existing heterogeneous LLMs.

\clearpage
\bibliographystyle{unsrtnat}
\bibliography{ref}

@article{guo2025deepseek, 
  title={Deepseek-r1: Incentivizing reasoning capability in llms via reinforcement learning},
  author={Guo, Daya and Yang, Dejian and Zhang, Haowei and Song, Junxiao and Wang, Peiyi and Zhu, Qihao and Xu, Runxin and Zhang, Ruoyu and Ma, Shirong and Bi, Xiao and others},
  journal={arXiv preprint arXiv:2501.12948},
  year={2025}
}

@article{zhao2025developingchemr, 
  title={Developing ChemDFM as a large language foundation model for chemistry},
  author={Zhao, Zihan and Ma, Da and Chen, Lu and Sun, Liangtai and Li, Zihao and Xia, Yi and Chen, Bo and Xu, Hongshen and Zhu, Zichen and Zhu, Su and others},
  journal={Cell Reports Physical Science},
  volume={6},
  number={4},
  year={2025},
  publisher={Elsevier}
}

@inproceedings{zhao2024chemdfm, 
  title={Chemdfm: A large language foundation model for chemistry},
  author={Zhao, Zihan and Ma, Da and Chen, Lu and Sun, Liangtai and Li, Zihao and Xia, Yi and Xu, Hongshen and Zhu, Zichen and Zhu, Su and Fan, Shuai and others},
  booktitle={Neurips 2024 Workshop Foundation Models for Science: Progress, Opportunities, and Challenges},
  year={2024}
}

@article{zhao2025chemdfmr,
  title={ChemDFM-R: A Chemical Reasoning LLM Enhanced with Atomized Chemical Knowledge},
  author={Zhao, Zihan and Chen, Bo and Wan, Ziping and Chen, Lu and Lin, Xuanze and Yu, Shiyang and Zhang, Situo and Ma, Da and Zhu, Zichen and Zhang, Danyang and others},
  journal={arXiv preprint arXiv:2507.21990},
  year={2025}
}

@article{wang2025txgemma,
  title={Txgemma: Efficient and agentic llms for therapeutics},
  author={Wang, Eric and Schmidgall, Samuel and Jaeger, Paul F and Zhang, Fan and Pilgrim, Rory and Matias, Yossi and Barral, Joelle and Fleet, David and Azizi, Shekoofeh},
  journal={arXiv preprint arXiv:2504.06196},
  year={2025}
}

@inproceedings{leviathan2023fastspeculative,
  title={Fast inference from transformers via speculative decoding},
  author={Leviathan, Yaniv and Kalman, Matan and Matias, Yossi},
  booktitle={International Conference on Machine Learning},
  pages={19274--19286},
  year={2023},
  organization={PMLR}
}

@article{cai2024medusa,
  title={Medusa: Simple llm inference acceleration framework with multiple decoding heads},
  author={Cai, Tianle and Li, Yuhong and Geng, Zhengyang and Peng, Hongwu and Lee, Jason D and Chen, Deming and Dao, Tri},
  journal={arXiv preprint arXiv:2401.10774},
  year={2024}
}

@article{li2024eagle,
  title={Eagle: Speculative sampling requires rethinking feature uncertainty},
  author={Li, Yuhui and Wei, Fangyun and Zhang, Chao and Zhang, Hongyang},
  journal={arXiv preprint arXiv:2401.15077},
  year={2024}
}

@article{li2025beyond,
  title={Beyond Chemical QA: Evaluating LLM's Chemical Reasoning with Modular Chemical Operations},
  author={Li, Hao and Cao, He and Feng, Bin and Shao, Yanjun and Tang, Xiangru and Yan, Zhiyuan and Yuan, Li and Tian, Yonghong and Li, Yu},
  journal={arXiv preprint arXiv:2505.21318},
  year={2025}
}

@article{li2024eagle2,
  title={Eagle-2: Faster inference of language models with dynamic draft trees, 2024b},
  author={Li, Yuhui and Wei, Fangyun and Zhang, Chao and Zhang, Hongyang},
  journal={URL https://arxiv. org/abs/2406.16858},
  volume={1},
  number={2},
  year={2024}
}

@article{chen2023accelerating,
  title={Accelerating large language model decoding with speculative sampling},
  author={Chen, Charlie and Borgeaud, Sebastian and Irving, Geoffrey and Lespiau, Jean-Baptiste and Sifre, Laurent and Jumper, John},
  journal={arXiv preprint arXiv:2302.01318},
  year={2023}
}

@article{meng2026sparse,
  title={Sparse but critical: A token-level analysis of distributional shifts in RLVR fine-tuning of LLMs},
  author={Meng, Haoming and Huang, Kexin and Wei, Shaohang and Ma, Chiyu and Yang, Shuo and Wang, Xue and Wang, Guoyin and Ding, Bolin and Zhou, Jingren},
  journal={arXiv preprint arXiv:2603.22446},
  year={2026}
}

@article{zhou2022mixture,
  title={Mixture-of-experts with expert choice routing},
  author={Zhou, Yanqi and Lei, Tao and Liu, Hanxiao and Du, Nan and Huang, Yanping and Zhao, Vincent and Dai, Andrew M and Le, Quoc V and Laudon, James and others},
  journal={Advances in Neural Information Processing Systems},
  volume={35},
  pages={7103--7114},
  year={2022}
}

@inproceedings{chai2024expert,
  title={An expert is worth one token: Synergizing multiple expert llms as generalist via expert token routing},
  author={Chai, Ziwei and Wang, Guoyin and Su, Jing and Zhang, Tianjie and Huang, Xuanwen and Wang, Xuwu and Xu, Jingjing and Yuan, Jianbo and Yang, Hongxia and Wu, Fei and others},
  booktitle={Proceedings of the 62nd Annual Meeting of the Association for Computational Linguistics (Volume 1: Long Papers)},
  pages={11385--11396},
  year={2024}
}

@article{xiong2026token,
  title={Token-Level LLM Collaboration via FusionRoute},
  author={Xiong, Nuoya and Zhou, Yuhang and Zeng, Hanqing and Chen, Zhaorun and Huang, Furong and Bi, Shuchao and Zhang, Lizhu and Zhao, Zhuokai},
  journal={arXiv preprint arXiv:2601.05106},
  year={2026}
}

@article{zheng2025citer,
  title={Citer: Collaborative inference for efficient large language model decoding with token-level routing},
  author={Zheng, Wenhao and Chen, Yixiao and Zhang, Weitong and Kundu, Souvik and Li, Yun and Liu, Zhengzhong and Xing, Eric P and Wang, Hongyi and Yao, Huaxiu},
  journal={arXiv preprint arXiv:2502.01976},
  year={2025}
}

@article{mirza2024large,
  title={Are large language models superhuman chemists?},
  author={Mirza, Adrian and Alampara, Nawaf and Kunchapu, Sreekanth and R{\'\i}os-Garc{\'\i}a, Marti{\~n}o and Emoekabu, Benedict and Krishnan, Aswanth and Gupta, Tanya and Schilling-Wilhelmi, Mara and Okereke, Macjonathan and Aneesh, Anagha and others},
  journal={arXiv preprint arXiv:2404.01475},
  year={2024}
}

@article{rein2023gpqa,
  title={Gpqa: A graduate-level google-proof q\&a benchmark},
  author={Rein, David and Hou, Betty Li and Stickland, Asa Cooper and Petty, Jackson and Pang, Richard Yuanzhe and Dirani, Julien and Michael, Julian and Bowman, Samuel R},
  journal={arXiv preprint arXiv:2311.12022},
  year={2023}
}

@article{li2026agentic,
  title={Agentic reinforcement learning empowers next-generation chemical language models for molecular design and synthesis},
  author={Li, Hao and Cao, He and Peng, Shenyao and Liu, Zijing and Feng, Bin and Wang, Yu and Yan, Zhiyuan and Tian, Yonghong and Li, Yu and Yuan, Li},
  journal={arXiv preprint arXiv:2601.17687},
  year={2026}
}

@inproceedings{labrak2024biomistral,
  title={Biomistral: A collection of open-source pretrained large language models for medical domains},
  author={Labrak, Yanis and Bazoge, Adrien and Morin, Emmanuel and Gourraud, Pierre-Antoine and Rouvier, Mickael and Dufour, Richard},
  booktitle={Findings of the association for computational linguistics: acl 2024},
  pages={5848--5864},
  year={2024}
}

@article{bai2025intern,
  title={Intern-s1: A scientific multimodal foundation model},
  author={Bai, Lei and Cai, Zhongrui and Cao, Yuhang and Cao, Maosong and Cao, Weihan and Chen, Chiyu and Chen, Haojiong and Chen, Kai and Chen, Pengcheng and Chen, Ying and others},
  journal={arXiv preprint arXiv:2508.15763},
  year={2025}
}

@inproceedings{zhang2024comprehensive,
  title={A comprehensive survey of scientific large language models and their applications in scientific discovery},
  author={Zhang, Yu and Chen, Xiusi and Jin, Bowen and Wang, Sheng and Ji, Shuiwang and Wang, Wei and Han, Jiawei},
  booktitle={Proceedings of the 2024 Conference on Empirical Methods in Natural Language Processing},
  pages={8783--8817},
  year={2024}
}

@article{hu2025survey,
  title={A survey of scientific large language models: From data foundations to agent frontiers},
  author={Hu, Ming and Ma, Chenglong and Li, Wei and Xu, Wanghan and Wu, Jiamin and Hu, Jucheng and Li, Tianbin and Zhuang, Guohang and Liu, Jiaqi and Lu, Yingzhou and others},
  journal={arXiv preprint arXiv:2508.21148},
  year={2025}
}

@article{bai2023qwen,
  title={Qwen technical report},
  author={Bai, Jinze and Bai, Shuai and Chu, Yunfei and Cui, Zeyu and Dang, Kai and Deng, Xiaodong and Fan, Yang and Ge, Wenbin and Han, Yu and Huang, Fei and others},
  journal={arXiv preprint arXiv:2309.16609},
  year={2023}
}

@article{qwen2024qwen2,
  title={Qwen2. 5 technical report},
  author={Qwen, A Yang and Yang, Baosong and Zhang, Beichen and Hui, Binyuan and Zheng, Bo and Yu, Bowen and Li, Chengpeng and Liu, Dayiheng and Huang, Fei and Wei, Haoran and others},
  journal={arXiv preprint arXiv:2412.15115},
  year={2024}
}

@article{yang2025qwen3,
  title={Qwen3 technical report},
  author={Yang, An and Li, Anfeng and Yang, Baosong and Zhang, Beichen and Hui, Binyuan and Zheng, Bo and Yu, Bowen and Gao, Chang and Huang, Chengen and Lv, Chenxu and others},
  journal={arXiv preprint arXiv:2505.09388},
  year={2025}
}

@misc{qwen3.5,
    title  = {{Qwen3.5}: Towards Native Multimodal Agents},
    author = {{Qwen Team}},
    year   = {2026},
    month  = {February},
    url    = {https://qwen.ai/blog?id=qwen3.5}
}

@article{touvron2023llama,
  title={Llama: Open and efficient foundation language models},
  author={Touvron, Hugo and Lavril, Thibaut and Izacard, Gautier and Martinet, Xavier and Lachaux, Marie-Anne and Lacroix, Timoth{\'e}e and Rozi{\`e}re, Baptiste and Goyal, Naman and Hambro, Eric and Azhar, Faisal and others},
  journal={arXiv preprint arXiv:2302.13971},
  year={2023}
}

@article{grattafiori2024llama,
  title={The llama 3 herd of models},
  author={Grattafiori, Aaron and Dubey, Abhimanyu and Jauhri, Abhinav and Pandey, Abhinav and Kadian, Abhishek and Al-Dahle, Ahmad and Letman, Aiesha and Mathur, Akhil and Schelten, Alan and Vaughan, Alex and others},
  journal={arXiv preprint arXiv:2407.21783},
  year={2024}
}

@article{jiang2023mistral,
  title={Mistral 7b. arxiv},
  author={Jiang, Albert Q and Sablayrolles, A and Mensch, A and Bamford, C and Chaplot, D Singh and Casas, Ddl and Bressand, F and Lengyel, G and Lample, G and Saulnier, L and others},
  journal={arXiv preprint arXiv:2310.06825},
  volume={10},
  pages={3},
  year={2023}
}

@article{openai2023gpt4,
  title={GPT-4 Technical Report},
  author={OpenAI},
  journal={arXiv preprint arXiv:2303.08774},
  year={2023}
}

@article{Ouyang2022instructgpt,
  title={Training language models to follow instructions with human feedback},
  author={Long Ouyang and Jeff Wu and Xu Jiang and Diogo Almeida and Carroll L. Wainwright and Pamela Mishkin and Chong Zhang and Sandhini Agarwal and Katarina Slama and Alex Ray and John Schulman and Jacob Hilton and Fraser Kelton and Luke Miller and Maddie Simens and Amanda Askell and Peter Welinder and Paul Christiano and Jan Leike and Ryan Lowe},
  journal={arXiv preprint arXiv:2203.02155},
  year={2022}
}

@article{li2025decoupled,
  title={Decoupled peak property learning for efficient and interpretable electronic circular dichroism spectrum prediction},
  author={Li, Hao and Long, Da and Yuan, Li and Wang, Yu and Tian, Yonghong and Wang, Xinchang and Mo, Fanyang},
  journal={Nature Computational Science},
  volume={5},
  number={3},
  pages={234--244},
  year={2025},
  publisher={Nature Publishing Group US New York}
}

@article{lv2025prollama,
  title={Prollama: A protein large language model for multi-task protein language processing},
  author={Lv, Liuzhenghao and Lin, Zongying and Li, Hao and Liu, Yuyang and Cui, Jiaxi and Chen, Calvin Yu-Chian and Yuan, Li and Tian, Yonghong},
  journal={IEEE Transactions on Artificial Intelligence},
  year={2025},
  publisher={IEEE}
}

@article{li2023weakly,
  title={Weakly-supervised 3d spatial reasoning for text-based visual question answering},
  author={Li, Hao and Huang, Jinfa and Jin, Peng and Song, Guoli and Wu, Qi and Chen, Jie},
  journal={IEEE Transactions on Image Processing},
  volume={32},
  pages={3367--3382},
  year={2023},
  publisher={IEEE}
}

@inproceedings{li2024freestyleret,
  title={Freestyleret: retrieving images from style-diversified queries},
  author={Li, Hao and Jia, Yanhao and Jin, Peng and Cheng, Zesen and Li, Kehan and Sui, Jialu and Liu, Chang and Yuan, Li},
  booktitle={European Conference on Computer Vision},
  pages={258--274},
  year={2024},
  organization={Springer}
}

@misc{claude2,
    title={Model Card and Evaluations for Claude Models},
    author={Anthropic},
    year={2023},
    url={https://www.anthropic.com/news/claude-2}
}

@misc{claude3,
    title={Introducing the next generation of Claude},
    author={Anthropic},
    year={2024},
    url={https://www.anthropic.com/news/claude-3-family}
}

@article{geminiteam2023gemini,
      title={Gemini: A Family of Highly Capable Multimodal Models}, 
      author={{Gemini Team Google}},
      year={2023},
      journal={arXiv preprint arXiv:2312.11805},
}

@misc{gemma,
    title={Gemma Open Models},
    author={Google},
    year={2024},
    url={https://ai.google.dev/gemma}
}

@article{phan2025humanity,
  title={Humanity's last exam},
  author={Phan, Long and Gatti, Alice and Han, Ziwen and Li, Nathaniel and Hu, Josephina and Zhang, Hugh and Zhang, Chen Bo Calvin and Shaaban, Mohamed and Ling, John and Shi, Sean and others},
  journal={arXiv preprint arXiv:2501.14249},
  year={2025}
}

@article{hendrycks2020measuring,
  title={Measuring massive multitask language understanding},
  author={Hendrycks, Dan and Burns, Collin and Basart, Steven and Zou, Andy and Mazeika, Mantas and Song, Dawn and Steinhardt, Jacob},
  journal={arXiv preprint arXiv:2009.03300},
  year={2020}
}

@article{wang2024mmlu,
  title={Mmlu-pro: A more robust and challenging multi-task language understanding benchmark},
  author={Wang, Yubo and Ma, Xueguang and Zhang, Ge and Ni, Yuansheng and Chandra, Abhranil and Guo, Shiguang and Ren, Weiming and Arulraj, Aaran and He, Xuan and Jiang, Ziyan and others},
  journal={Advances in Neural Information Processing Systems},
  volume={37},
  pages={95266--95290},
  year={2024}
}

@article{auer2023sciqa,
  title={The sciqa scientific question answering benchmark for scholarly knowledge},
  author={Auer, S{\"o}ren and Barone, Dante AC and Bartz, Cassiano and Cortes, Eduardo G and Jaradeh, Mohamad Yaser and Karras, Oliver and Koubarakis, Manolis and Mouromtsev, Dmitry and Pliukhin, Dmitrii and Radyush, Daniil and others},
  journal={Scientific Reports},
  volume={13},
  number={1},
  pages={7240},
  year={2023},
  publisher={Nature Publishing Group UK London}
}

@inproceedings{lu2024moleculeqa,
  title={Moleculeqa: A dataset to evaluate factual accuracy in molecular comprehension},
  author={Lu, Xingyu and Cao, He and Liu, Zijing and Bai, Shengyuan and Chen, Leqing and Yao, Yuan and Zheng, Hai-Tao and Li, Yu},
  booktitle={Findings of the Association for Computational Linguistics: EMNLP 2024},
  pages={3769--3789},
  year={2024}
}

@misc{raman01,
  author = {Diddigam, Sai Praneeth},
  title = {Raman-01: Compact RL-Enhanced Physics Solver},
  year = {2025},
  howpublished = {\url{https://huggingface.co/think-a-tron/raman-01-1.7B}},
  note = {RL (GRPO) finetuned on Qwen3-1.7B}
}

@misc{patino2025,
  title={Unlocking On-Policy Distillation for Any Model Family},
  author={Carlos Miguel Patiño and Kashif Rasul and Quentin Gallouédec and Ben Burtenshaw and Sergio Paniego and Vaibhav Srivastav and Thibaud Frere and Ed Beeching and Lewis Tunstall and Leandro von Werra and Thomas Wolf},
  year={2025},
}

\clearpage

\appendix
\section{Theoretical Analysis}
In this section, we provide sufficient conditions under which a JS-thresholded routing policy outperforms both constituent models. This could be an explaintion for why js-routed divergence decoding outperforms both models under general conditions.
\subsection{Theoretical Setup}
Let the decoding state at step $t$ be the prefix
\begin{equation*}
    h_t = (x_1,\ldots,x_{t-1}).
\end{equation*}
Consider a domain-specialized model $A$ and a general-purpose model $B$.
We denote by $p_A(\cdot \mid h_t)$ and $p_B(\cdot \mid h_t)$ the next-token distributions of models $A$ and $B$, respectively, and by $r_t(\cdot)$ an unknown target distribution that represents the desired conditional distribution at state $h_t$. Throughout, we suppress the conditioning on $h_t$ when no ambiguity arises.

\paragraph{Model assumptions.} We consider general assumptions on the domain-specialized model $A$ and the general-purpose model $B$. For a powerful general-purpose model, we assume that it is capable enough to fit the domain-specific task with a small bounded error. Furthermore, we assume there is an in-domain state subspace where the domain-specialized model fundamentally outperforms the general-purpose model. 

\begin{assumption}[Robustness of the General-Purpose Model]
\label{assump:general_robustness}
There exists a constant $\epsilon_B > 0$, such that for every decoding state $h_t$, the general-purpose model $B$ maintains a bounded distance to the target distribution $r_t$:
\begin{equation*}
    \sqrt{D_{\mathrm{JS}}(p_B, r_t)} \le \epsilon_B.
\end{equation*}
\end{assumption}

\begin{assumption}[In-Domain Subspace Advantage]
\label{assump:in_domain_advantage}
There exist a constant $\delta>0$ and an in-domain state subspace $\mathcal{H}_{\mathrm{in}}$ for the domain-specialized model $A$ such that for any decoding state $h_t \in \mathcal{H}_{\mathrm{in}}$, model $A$ maintains a strict precision advantage over model $B$, such that the difference in their JS distances to the target distribution satisfies:
\begin{equation*}
    \sqrt{D_{\mathrm{JS}}(p_B, r_t)} - \sqrt{D_{\mathrm{JS}}(p_A, r_t)} \ge \delta.
\end{equation*}
Moreover, there exists a constant $\epsilon_A$ depending on $\delta$, $\forall h_t \notin \mathcal{H}_{\mathrm{in}}$, 
\begin{equation*}
    \sqrt{D_{\mathrm{JS}}(p_A, r_t)} \ge \epsilon_A.
\end{equation*}
\end{assumption}

\paragraph{JS-gated cross-model routing.} Define the observable disagreement signal as the JS divergence between $p_A(\cdot \mid h_t)$ and $p_B(\cdot \mid h_t)$
\begin{equation*}
    s_t = D_\mathrm{JS}\left(p_A(\cdot \mid h_t), p_B(\cdot \mid h_t)\right).
\end{equation*}
Given a threshold $\tau > 0$, we define the hard gate
\begin{equation*}
    g_t = \mathbf{1}[s_t > \tau].
\end{equation*}
Hence, the routed policy is defined as
\begin{align}
    \label{eq: JS-gated}
    \pi_\tau(\cdot \mid h_t)=(1-g_t) p_A(\cdot \mid h_t) + g_t p_B(\cdot \mid h_t). \tag{JS-gated routing}
\end{align}
Thus, the decoder accepts the specialized model in low-disagreement states and falls back to the general model in high-disagreement states.
\paragraph{Goal.}
We measure the risk with respect to the target distribution using 
\begin{equation*}
    \ell_i(t) = D_{\rm JS}(r_t,p_i),\quad i\in \{A,B\}.
\end{equation*}
We further define the expected risk
\begin{equation*}
    L(i)=\mathbb{E}[\ell_i(t)].
\end{equation*}
Hence, the routed loss is
\begin{equation*}
    \ell_\tau(t) = (1-g_t)\ell_A(t) + g_t \ell_B(t).
\end{equation*}
The expected risk for the JS-gated routing is
\begin{equation*}
    L(\tau) = \mathbb{E}[\ell_\tau(t)].
\end{equation*}
The goal is to show that under general conditions, $L(\tau)$ can be smaller than $L(A)$ and $L(B)$. Namely, JS-gated cross-model routing can perform better than specialized model $A$ and general model $B$.

\subsection{Theoretical Proof}
\label{app:theory_proof}
To connect the risk metric with our assumptions, we define the risk difference $\Delta_t = \ell_B(t) - \ell_A(t)$. 
Remark that we only consider the task is non-trivial, i.e., $\mathbb{P}(s_t > \tau) > 0, \ \mathbb{P}(s_t \le \tau) > 0$.
If the task were entirely within the specialized domain ($\mathbb{P}(s_t > \tau) = 0$) or entirely out-of-domain ($\mathbb{P}(s_t \le \tau) = 0$), routing would be unnecessary, as a single model would suffice.

\begin{theorem}
\label{thm:routing_superiority}
    Under Assumption \ref{assump:general_robustness} and Assumption \ref{assump:in_domain_advantage}, if $\tau,\epsilon_B,\epsilon_A$ satisfy
    \begin{equation*}
        \tau \geq 4\epsilon_B^2, \quad \epsilon_A > \sqrt{\tau}+\epsilon_B,
    \end{equation*}
    then the expected risk of the JS-gated cross-model routing policy is strictly less than that of both individual models:
    \begin{equation*}
    L(\tau) < \min(L(A),L(B)).
\end{equation*}
\end{theorem}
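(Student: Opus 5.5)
The plan is to work with the metric $d(p,q)=\sqrt{D_{\mathrm{JS}}(p,q)}$ rather than with $D_{\mathrm{JS}}$ itself, since the square root of the Jensen--Shannon divergence is a genuine metric on distributions and hence satisfies the triangle inequality. In this notation, Assumption~\ref{assump:general_robustness} reads $d(p_B,r_t)\le\epsilon_B$, and the gate fires exactly when $d(p_A,p_B)>\sqrt{\tau}$. The whole proof reduces to a pointwise statement: on each of the two routing regions, the model selected by the gate has strictly smaller risk than the one discarded. I will then integrate using the identities
\[
L(A)-L(\tau)=\mathbb{E}\bigl[g_t(\ell_A(t)-\ell_B(t))\bigr],\qquad L(B)-L(\tau)=\mathbb{E}\bigl[(1-g_t)(\ell_B(t)-\ell_A(t))\bigr],
\]
which follow directly from $\ell_\tau=(1-g_t)\ell_A+g_t\ell_B$.

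\textbf{High-disagreement region ($s_t>\tau$).} Here $d(p_A,p_B)>\sqrt{\tau}\ge 2\epsilon_B$, using $\tau\ge 4\epsilon_B^2$. The reverse triangle inequality then gives
\[
d(p_A,r_t)\ge d(p_A,p_B)-d(p_B,r_t)>2\epsilon_B-\epsilon_B=\epsilon_B\ge d(p_B,r_t).
\]
Both distances are nonnegative, so squaring preserves the inequality and yields $\ell_A(t)>\ell_B(t)$ strictly. Consequently $g_t(\ell_A-\ell_B)>0$ on this event and equals zero off it.

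\textbf{Low-disagreement region ($s_t\le\tau$).} I will argue by contradiction that $h_t\in\mathcal{H}_{\mathrm{in}}$. If $h_t\notin\mathcal{H}_{\mathrm{in}}$, then Assumption~\ref{assump:in_domain_advantage} gives $d(p_A,r_t)\ge\epsilon_A>\sqrt{\tau}+\epsilon_B$. The triangle inequality then forces
\[
d(p_A,p_B)\ge d(p_A,r_t)-d(p_B,r_t)>\sqrt{\tau},
\]
that is, $s_t>\tau$, which is a contradiction. So $h_t\in\mathcal{H}_{\mathrm{in}}$, and the in-domain advantage gives $d(p_A,r_t)\le d(p_B,r_t)-\delta<d(p_B,r_t)$. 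Squaring two nonnegative quantities, we get $\ell_A(t)<\ell_B(t)$ strictly, so $(1-g_t)(\ell_B-\ell_A)>0$ on this event.

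\textbf{Passing to expectations.} All losses are bounded by $\log 2$, so every expectation is finite and linearity applies. Each integrand above is nonnegative everywhere and strictly positive on an event of positive probability, using the standing non-triviality remark $\mathbb{P}(s_t>\tau)>0$ and $\mathbb{P}(s_t\le\tau)>0$. A nonnegative random variable that is strictly positive on a positive-probability event has strictly positive expectation. Hence $L(\tau)<L(A)$ and $L(\tau)<L(B)$, which gives the theorem.

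The main point needing care is this last strictness step. It requires the pointwise strict inequalities, which is why I use $\delta>0$ and the strict inequality $\epsilon_A>\sqrt{\tau}+\epsilon_B$. It also requires the two regions to carry positive mass. Beyond that, one only has to invoke the known metric property of $\sqrt{D_{\mathrm{JS}}}$; I will cite it rather than reprove it.
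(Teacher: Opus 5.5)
Your proof is correct and follows essentially the same route as the paper. Both use the same decomposition of $L(\tau)$ against $L(A)$ and $L(B)$, and the same triangle-inequality arguments for the metric $\sqrt{D_{\mathrm{JS}}}$ on the two routing regions. This includes the contrapositive showing that $s_t\le\tau$ forces $h_t\in\mathcal{H}_{\mathrm{in}}$, and the same reliance on the standing non-triviality condition $\mathbb{P}(s_t>\tau)>0$, $\mathbb{P}(s_t\le\tau)>0$. The differences are cosmetic: you use unconditional expectations of nonnegative integrands and a strict pointwise inequality, where the paper uses conditional expectations and the bound $\Delta_t\ge\delta^2$ on the low-disagreement region.
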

\begin{proof}
    By definition of the routed policy, the loss at step $t$ is:
    \begin{equation*}
        \ell_\tau(t) = \ell_A(t) + g_t(\ell_B(t)-\ell_A(t)) = \ell_A(t) + g_t \Delta_t.
    \end{equation*}
    Taking the expectation yields the risk difference relative to model $A$:
    \begin{equation*}
        L(\tau) - L(A) = \mathbb{E}[g_t \Delta_t] = \mathbb{P}(s_t > \tau)\mathbb{E}[\Delta_t \mid s_t > \tau].
    \end{equation*}
    Similarly, expressing the routed loss in terms of model $B$ gives $\ell_\tau(t) = \ell_B(t) - (1-g_t)\Delta_t$, which implies:
    \begin{equation*}
        L(\tau) - L(B) = -\mathbb{E}[(1-g_t)\Delta_t] = -\mathbb{P}(s_t \le \tau)\mathbb{E}[\Delta_t \mid s_t \le \tau].
    \end{equation*}
    To prove $L(\tau) < \min(L(A), L(B))$, it suffices to show that $\mathbb{E}[\Delta_t \mid s_t > \tau] < 0$ and $\mathbb{E}[\Delta_t \mid s_t \le \tau] > 0$. We mainly leverage the fact that the square root of the Jensen-Shannon divergence, $\sqrt{D_{\mathrm{JS}}(P, Q)}$, is a true mathematical metric and therefore satisfies the triangle inequality.
    \begin{itemize}
        \item \textbf{Case 1: High-disagreement region ($s_t > \tau$)}

        Noticing that
        \begin{equation*}
            \sqrt{s_t} = \sqrt{D_\mathrm{JS}\left(p_A(\cdot \mid h_t), p_B(\cdot \mid h_t)\right)} \leq \sqrt{D_{\mathrm{JS}}(p_A, r_t)} + \sqrt{D_{\mathrm{JS}}(p_B, r_t)},
        \end{equation*}
        and by Assumption \ref{assump:general_robustness},
        \begin{equation*}
            \sqrt{D_{\mathrm{JS}}(p_B, r_t)} \le \epsilon_B,
        \end{equation*}
        we have
        \begin{equation*}
            \sqrt{D_{\mathrm{JS}}(p_A, r_t)} \ge \sqrt{s_t} - \sqrt{D_{\mathrm{JS}}(p_B, r_t)} > 2\epsilon_B - \epsilon_B = \epsilon_B,
        \end{equation*}
        where we use $s_t>\tau\geq 4\epsilon_B^2$. This strictly guarantees $\sqrt{D_{\mathrm{JS}}(p_A, r_t)} > \sqrt{D_{\mathrm{JS}}(p_B, r_t)}$. Consequently, model $A$ diverges further from the target distribution than model $B$, yielding a negative risk difference $\Delta_t < 0$. Therefore, $\mathbb{E}[\Delta_t \mid s_t > \tau] < 0$.
        \item \textbf{Case 2: Low-disagreement region ($s_t \le \tau$)}
        
        We analyze the state space by bounding the divergence in the out-of-domain region. For any out-of-domain state $h_t \notin \mathcal{H}_{\mathrm{in}}$, applying the reverse triangle inequality on the JS divergence metric yields:
        \begin{equation*}
            \sqrt{s_t} = \sqrt{D_{\mathrm{JS}}(p_A, p_B)} \ge \left| \sqrt{D_{\mathrm{JS}}(p_A, r_t)} - \sqrt{D_{\mathrm{JS}}(p_B, r_t)} \right|.
        \end{equation*}
        By Assumption \ref{assump:in_domain_advantage}, the specialized model degrades significantly out-of-domain such that $\sqrt{D_{\mathrm{JS}}(p_A, r_t)} \ge \epsilon_A$, and by Assumption \ref{assump:general_robustness}, the general model maintains $\sqrt{D_{\mathrm{JS}}(p_B, r_t)} \le \epsilon_B$. Under the condition that the parameter $\epsilon_A$ satisfies $\epsilon_A > \sqrt{\tau} + \epsilon_B$, we obtain the lower bound for the out-of-domain disagreement signal:
        \begin{equation*}
            \sqrt{s_t} \ge \epsilon_A - \epsilon_B > \sqrt{\tau}.
        \end{equation*}
        This strictly implies $s_t > \tau$ for all $h_t \notin \mathcal{H}_{\mathrm{in}}$. By contraposition, observing a low-disagreement signal mathematically guarantees that the decoding state resides within the in-domain subspace:
        \begin{equation*}
            s_t \le \tau \implies h_t \in \mathcal{H}_{\mathrm{in}}.
        \end{equation*}
        Consequently, within this low-disagreement region, the strict precision advantage of model $A$ from Assumption \ref{assump:in_domain_advantage} definitively applies. For any $h_t \in \mathcal{H}_{\mathrm{in}}$, we have $\sqrt{D_{\mathrm{JS}}(p_B, r_t)} - \sqrt{D_{\mathrm{JS}}(p_A, r_t)} \ge \delta > 0$. The risk difference expands as:
        \begin{equation*}
            \Delta_t = \ell_B(t) - \ell_A(t) = \left(\sqrt{\ell_B(t)} - \sqrt{\ell_A(t)}\right)\left(\sqrt{\ell_B(t)} + \sqrt{\ell_A(t)}\right) \ge \delta \cdot (\delta + 0) = \delta^2 > 0.
        \end{equation*}
        Since $\Delta_t$ is strictly bounded below by $\delta^2 > 0$ for all states satisfying $s_t \le \tau$, its conditional expectation is strictly positive:
        \begin{equation*}
            \mathbb{E}[\Delta_t \mid s_t \le \tau] \ge \delta^2 > 0.
        \end{equation*}
    \end{itemize}
    Together with Case 1 and Case 2, the proof is completed.
\end{proof}

\section{Implementation Details.}
First, we set the block size to 10 in all experiments. That is, the science model samples a block of ten tokens at each drafting step, and the reasoning LLM then computes the predictive distributions at the corresponding ten token positions.

Second, for the JS threshold selection, we consider that science-domain models such as ChemDFM-R~\cite{zhao2025chemdfmr} already possess strong molecular understanding. Therefore, for the evaluation on ChemCOTBench~\cite{li2025beyond}, we follow the setting of~\cite{meng2026sparse} and set the JS threshold to 0.65 with a temperature of 1.0. Using a temperature of 1.0 mitigates the influence of model-specific decoding configurations on routing decisions, while a JS threshold of 0.65 restricts the proportion of injected tokens to approximately 3\%--5\% of the total generated tokens. This allows the decoder to enhance reasoning capability while largely preserving the model's domain-specific knowledge, given that the theoretical upper bound of the JS divergence is around 0.70.

For ChemBench, considering that the domain knowledge acquired by science-domain models during training is primarily concentrated on molecular and reaction understanding~\cite{li2024freestyleret}, whereas the benchmark places greater emphasis on reasoning capability, we lower the JS threshold to around 0.2 in this evaluation. This setting permits resampling at more token positions, thereby facilitating a better integration of domain-specific knowledge and general reasoning ability.

Third, for the computation of top-10 JS divergence, the distributions proposed by the two models, denoted as $p$ and $q$, are almost always not exactly identical. We therefore first construct an intermediate support $m$ that contains all tokens appearing in either $p$ or $q$. For tokens that appear in $m$ but are absent from either $p$ or $q$, we extend the corresponding distribution and assign probability zero to those missing tokens. We then compute the KL divergence from $p$ to $m$ and from $q$ to $m$, respectively, to obtain the JS divergence.

When the prefixes differ, feeding the same text into another model may lead to inconsistent tokenization. To make the distributions from the two models comparable, we first perform sequence-level alignment. We then merge and normalize the probabilities over the aligned segments, and compute the JS divergence only on the aligned parts.

We note that this sequence-alignment procedure is highly similar to part of the method used in universal cross-vocabulary JS divergence~\cite{patino2025}, so we include it as a comparison in the ablation study in the main paper. Universal cross-vocabulary JS divergence is designed for model distillation when tokenizers are inconsistent. It first performs sequence alignment and probability merging. Then, for the two distributions, it directly computes JS divergence over the alignable segments, pads the unaligned tails into equal-length vectors, computes an additional distance over these tails, and adds the two terms as the final distributional distance. In practice, to avoid excessive computation, we use the top-200 distributions to compute this universal JS divergence and compare its performance with our method.

Similarly, the thresholds for the log-probability difference baseline and the universal JS divergence baseline are chosen so that only a very small fraction of tokens are resampled. Their thresholds are set to 5.0 and 0.11, respectively.
\section{Sensitivity Analysis}
\label{app:sen}
We compare performance on molecule understanding and editing tasks in ChemCoTBench under different setting of js-thresholds and different temperatures. Detailed results are presented in table \ref{tab:chem_understanding_edit_threshold} and \ref{tab:chem_understanding_edit_temperature}.
\begin{table*}[!ht]
  \caption{Sensitivity analysis of JS-threshold selection.}
  \label{tab:chem_understanding_edit_threshold}
  \centering
  \setlength{\tabcolsep}{3pt}
  \small
  \renewcommand{\arraystretch}{1.02}
  \setlength{\tabcolsep}{2.3mm}

  \begin{tabular}{l|cc|cc|c|ccc}
    \toprule
    \multirow{2}{*}{JS threshold}
      & \multicolumn{2}{c|}{Func-Group}
      & \multicolumn{2}{c|}{Scaffold}
      & \multicolumn{1}{c|}{SMILES}
      & \multicolumn{3}{c}{Molecule-Edit} \\
    \cmidrule(r){2-3} \cmidrule(r){4-5} \cmidrule(r){6-6} \cmidrule(r){7-9}
      & FG$\downarrow$
      & Ring$\downarrow$
      & Murcko$\uparrow$
      & Ring-sys$\uparrow$
      & Eq.$\uparrow$
      & Add$\uparrow$
      & Delete$\uparrow$
      & Sub$\uparrow$ \\
    \midrule

    \multicolumn{9}{c}{\textbf{Backbone: Qwen}} \\
    \midrule

    \makecell[l]{0.56}
      & 0.24 & 0.89 & 0.74 & 0.62 & 0.78
      & 55 & 70 & 58 \\

    \makecell[l]{0.58}
      & 0.21 & 0.63 & 0.82 & 0.56 & 0.67
      & 50 & \textbf{80} & 60 \\

    \makecell[l]{0.60}
      & 0.20 & 1.05 & 0.83 & 0.58 & 0.75
      & 55 & 70 & 56 \\

    \makecell[l]{0.62}
      & 0.27 & 1.05 & 0.85 & \textbf{0.68} & 0.85
      & 55 & 65 & \textbf{62} \\

    \makecell[l]{0.64}
      & \textbf{0.19} & 0.72 & \textbf{0.88} & 0.67 & 0.79
      & 60 & \textbf{80} & 60 \\

    \makecell[l]{0.66}
      & 0.21 & \textbf{0.55} & 0.74 & 0.63 & 0.84
      & \textbf{65} & 70 & 52 \\

    \bottomrule
  \end{tabular}
  \vspace{-6pt}
\end{table*}

\begin{table*}[!ht]
  \caption{Sensitivity analysis of temperature selection.}
  \label{tab:chem_understanding_edit_temperature}
  \centering
  \setlength{\tabcolsep}{3pt}
  \small
  \renewcommand{\arraystretch}{1.02}
  \setlength{\tabcolsep}{2.3mm}

  \begin{tabular}{l|cc|cc|c|ccc}
    \toprule
    \multirow{2}{*}{Temperature}
      & \multicolumn{2}{c|}{Func-Group}
      & \multicolumn{2}{c|}{Scaffold}
      & \multicolumn{1}{c|}{SMILES}
      & \multicolumn{3}{c}{Molecule-Edit} \\
    \cmidrule(r){2-3} \cmidrule(r){4-5} \cmidrule(r){6-6} \cmidrule(r){7-9}
      & FG$\downarrow$
      & Ring$\downarrow$
      & Murcko$\uparrow$
      & Ring-sys$\uparrow$
      & Eq.$\uparrow$
      & Add$\uparrow$
      & Delete$\uparrow$
      & Sub$\uparrow$ \\
    \midrule

    \multicolumn{9}{c}{\textbf{Backbone: Qwen}} \\
    \midrule

    \makecell[l]{0.2}
      & 0.1978 & 0.80 & 0.84 & 0.66 & 0.70
      & 65 & 70 & 55 \\

    \makecell[l]{0.4}
      & 0.21 & \textbf{0.55} & 0.83 & 0.65 & 0.81
      & 50 & 60 & 46 \\

    \makecell[l]{0.6}
      & 0.21 & 0.75 & 0.85 & 0.60 & \textbf{0.83}
      & 60 & 55 & 50 \\

    \makecell[l]{0.8}
      & 0.21 & 0.85 & 0.86 & 0.61 & 0.77
      & 60 & \textbf{80} & 51 \\

    \makecell[l]{1.0}
      & \textbf{0.19} & 0.72 & \textbf{0.88} & \textbf{0.67} & 0.79
      & 60 & \textbf{80} & \textbf{60} \\

    \makecell[l]{1.2}
      & 0.22 & 0.83 & 0.85 & 0.64 & 0.77
      & \textbf{70} & 70 & 46 \\

    \bottomrule
  \end{tabular}
  \vspace{-6pt}
\end{table*}
\section{Compute Resources and Model/Asset Details.}
All experiments were conducted on a single machine equipped with four NVIDIA A800 GPUs, each with 80GB memory. The detailed running time is listed in Table \ref{tab:chem_time_comparison}.
\begin{table*}[!ht]
  \centering
    \caption{End-to-end generation time in ChemCoTBench.}
  \label{tab:chem_time_comparison}
  \setlength{\tabcolsep}{3pt}
  \small
  \renewcommand{\arraystretch}{1.02}
  \setlength{\tabcolsep}{2.3mm}

  \begin{tabular}{l|cc|cc|c|ccc}
    \toprule
    \multirow{2}{*}{Methods}
      & \multicolumn{2}{c|}{Func-Group}
      & \multicolumn{2}{c|}{Scaffold}
      & \multicolumn{1}{c|}{SMILES}
      & \multicolumn{3}{c}{Molecule-Edit} \\
    \cmidrule(r){2-3} \cmidrule(r){4-5} \cmidrule(r){6-6} \cmidrule(r){7-9}
      & FG$\downarrow$
      & Ring$\downarrow$
      & Murcko$\downarrow$
      & Ring-sys$\downarrow$
      & Eq.$\downarrow$
      & Add$\downarrow$
      & Delete$\downarrow$
      & Sub$\downarrow$ \\
    \midrule
    Single LLM
      & \textbf{129.30}
      & 194.47
      & 252.93
      & \textbf{165.20}
      & \textbf{203.19}
      & 224.47
      & 247.57
      & \textbf{184.17} \\
    Diver. Decode 
      & 369.77
      & \textbf{73.72}
      & \textbf{186.29}
      & 210.11
      & 325.33
      & \textbf{117.17}
      & \textbf{109.04}
      & 282.08 \\
    \midrule
    Single / DD
      & 0.35$\times$
      & \textbf{2.64$\times$}
      & \textbf{1.36$\times$}
      & 0.79$\times$
      & 0.62$\times$
      & \textbf{1.92$\times$}
      & \textbf{2.27$\times$}
      & 0.65$\times$ \\
    \bottomrule
  \end{tabular}
\end{table*}

We report the default checkpoint precision of all HuggingFace models used in our experiments in Table~\ref{tab:model_precision}. Unless otherwise specified, the precision corresponds to the tensor type or \texttt{torch\_dtype} reported by the corresponding HuggingFace model repository.
\begin{table}[!ht]
  \centering\
  \caption{Default checkpoint precision of HuggingFace models used in our experiments.}
  \label{tab:model_precision}
  \small
  \renewcommand{\arraystretch}{1.05}
  \setlength{\tabcolsep}{4pt}
  \begin{tabular}{l|l|l}
    \toprule
    Model & HuggingFace repository & Default precision \\
    \midrule
    R1-Distill-Qwen-32B 
      & \texttt{deepseek-ai/DeepSeek-R1-Distill-Qwen-32B} 
      & BF16 \\
    R1-Distill-LLaMA-70B 
      & \texttt{deepseek-ai/DeepSeek-R1-Distill-Llama-70B} 
      & BF16 \\
    ChemDFM-R 
      & \texttt{OpenDFM/ChemDFM-R-14B} 
      & BF16 \\
    Chem-R 
      & \texttt{weidawang/Chem-R-8B} 
      & BF16 \\
    TxGemma 
      & \texttt{google/txgemma-2b-predict} 
      & FP32 \\
    Raman-1.7B 
      & \texttt{think-a-tron/raman-01-1.7B} 
      & FP32 \\
    \bottomrule
  \end{tabular}
\end{table}
In addition to model precision, all the benchmarks and evaluation protocols used in our experiments are existing public evaluation datasets. ChemCoTBench is used to evaluate molecule understanding and editing abilities, ChemBench is used to evaluate broader chemistry knowledge across multiple subdomains, and GPQA-diamond is used to evaluate expert-level scientific reasoning in chemistry, biology, and physics.
\section{Case study}
\label{app-casestudy}
We present more cases to further demonstrate that divergence decoding can correct the reasoning process at an early stage, thereby helping the model focus more directly on the chemical facts relevant to the question.
\begin{figure}[h]
    \centering
    \includegraphics[width=1.\linewidth]{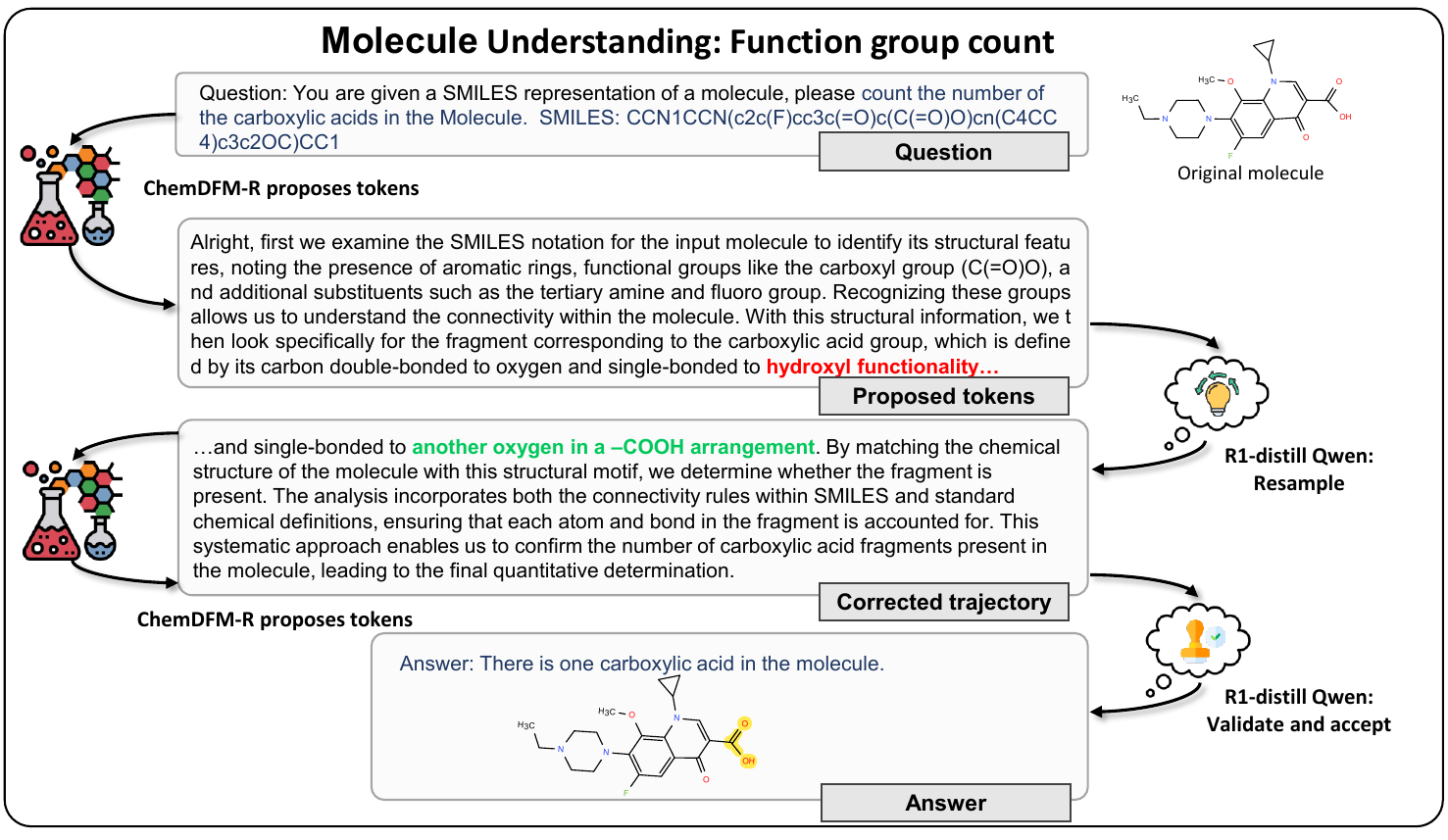}
    \captionsetup{skip=2pt}
    \caption{\textbf{\textit{Case study:}} divergence decoding between ChemDFM and R1-Distill Qwen affects the reasoning process in the function group counting task.}
    \label{fig:resample-study-app2}
\end{figure}

\begin{figure}[h]
    \centering
    \includegraphics[width=1.\linewidth]{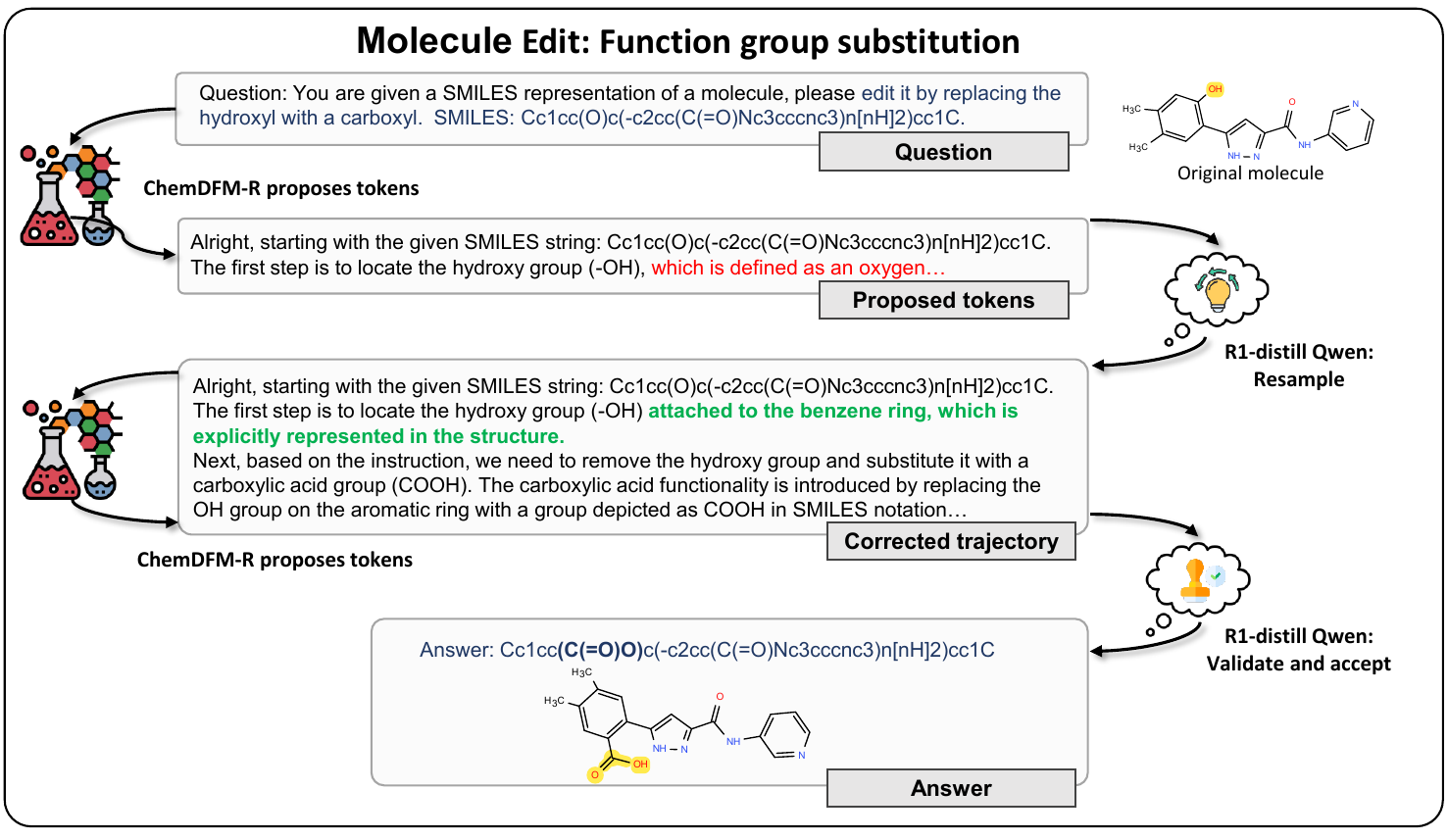}
    \captionsetup{skip=2pt}
    \caption{\textbf{\textit{Case study:}} divergence decoding between ChemDFM and R1-Distill Qwen affects the reasoning process in the molecule editing task.}
    \label{fig:resample-study-app}
\end{figure}

\begin{figure}[h]
    \centering
    \includegraphics[width=1.\linewidth]{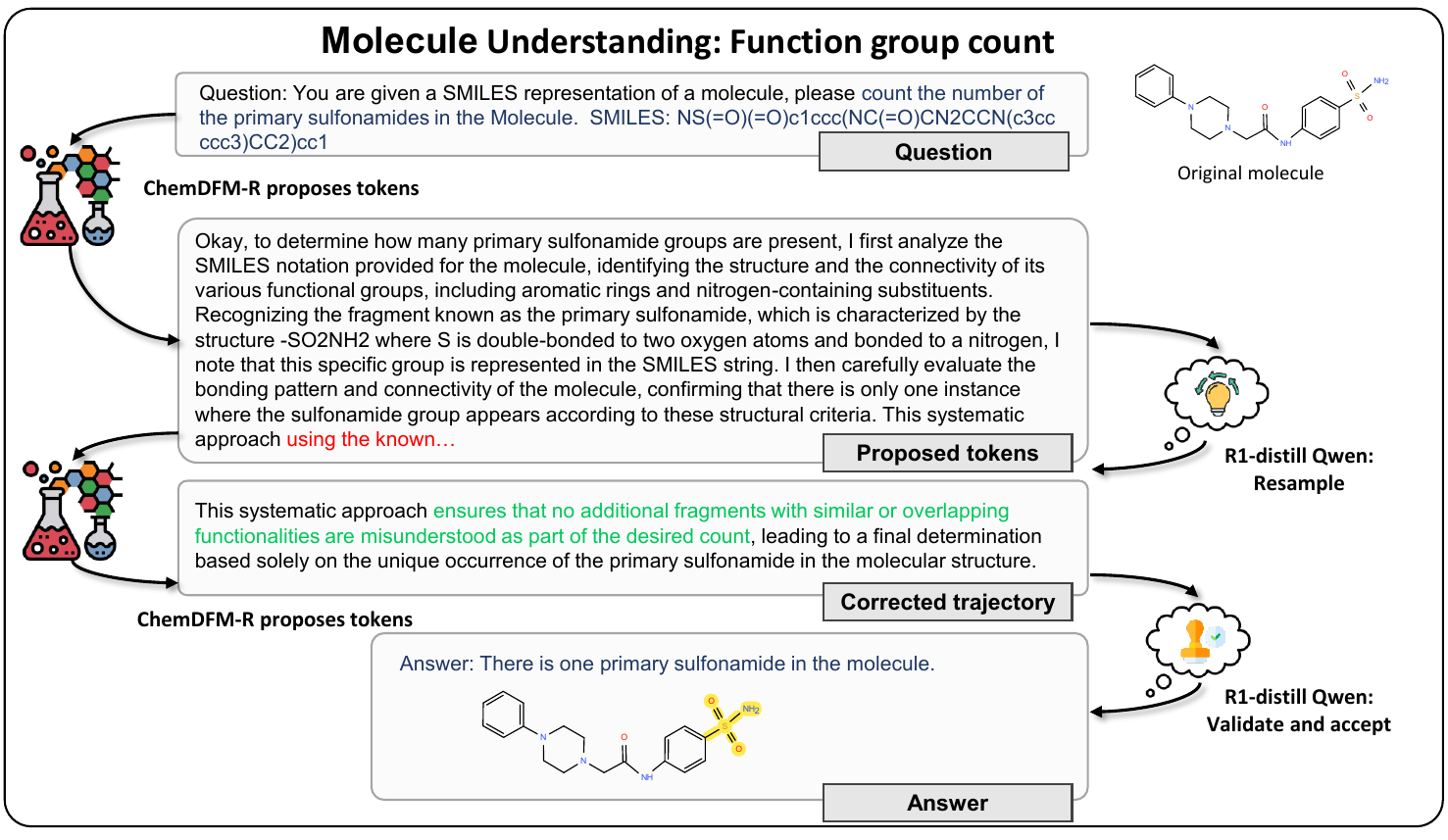}
    \captionsetup{skip=2pt}
    \caption{\textbf{\textit{Case study:}} divergence decoding between ChemDFM and R1-Distill Qwen affects the reasoning process in the function group counting task.}
    \label{fig:resample-study-app3}
\end{figure}

\begin{figure}[h]
    \centering
    \includegraphics[width=1.\linewidth]{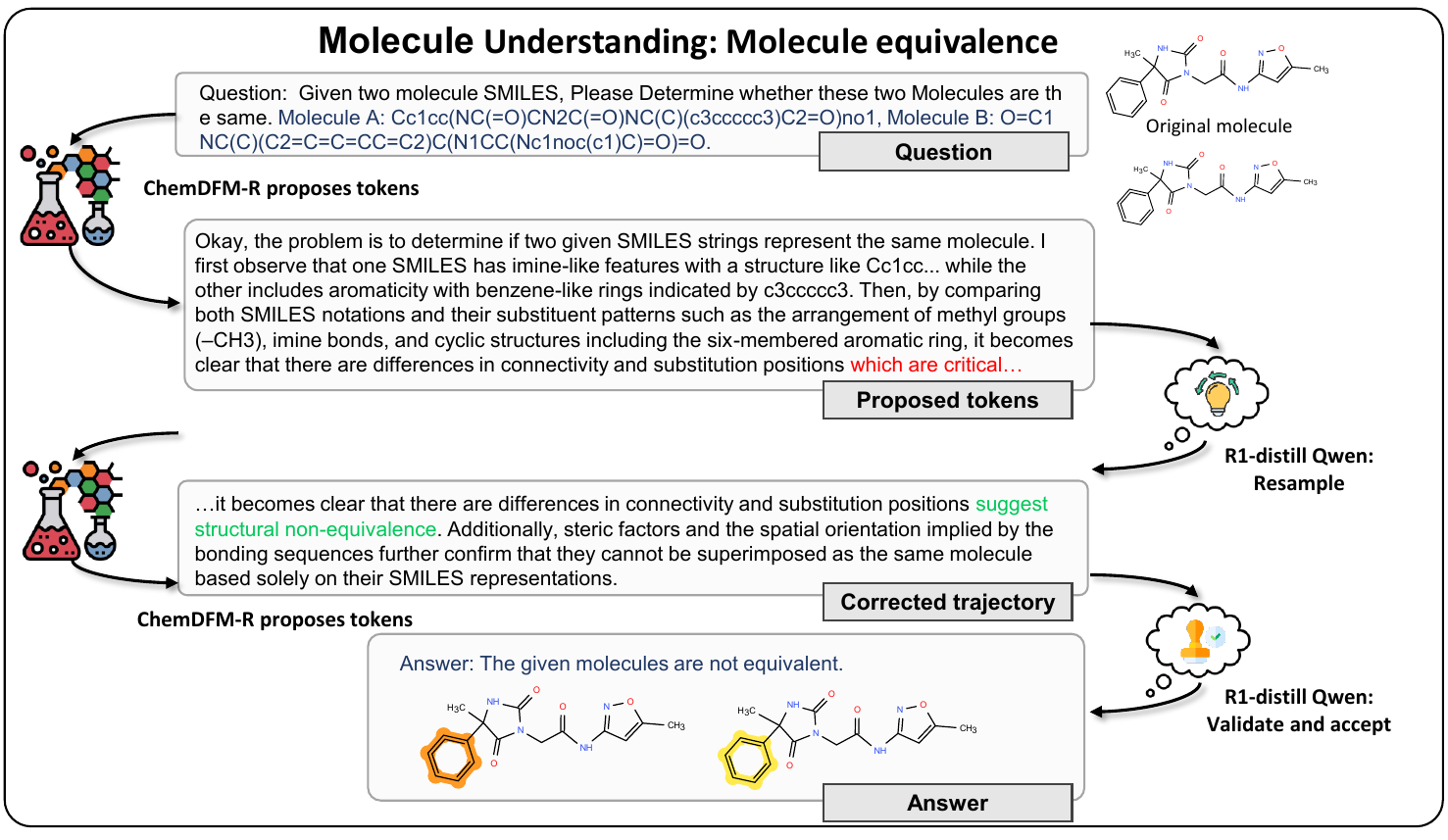}
    \captionsetup{skip=2pt}
    \caption{\textbf{\textit{Case study:}} divergence decoding between ChemDFM and R1-Distill Qwen affects the reasoning process in the function group counting task.}
    \label{fig:resample-study-app4}
\end{figure}

\section{Code Availability}
\label{anoycode}
The anonymous source code for reproducing our experiments is available at:
\url{https://github.com/wyattxuanyang/Divergence-Decoding}
\section{Limitations and Broader Impacts}
\paragraph{Limitations.}
Although Divergence Decoding provides a training-free way to fuse domain expertise and general reasoning, it still has several limitations. First, the method relies on the complementarity between the specialist model and the general reasoning model. If the two models make highly correlated errors, or if the generalist does not provide a reliable fallback signal, the benefit of token-level routing may be limited. Second, the method introduces additional hyperparameters, including the JS threshold, block size, temperature, and the top-$k$ support used for divergence computation. While our experiments show stable improvements under the chosen settings, different model pairs or domains may require further calibration. Third, although the draft-then-verify design can offset part of the overhead, Divergence Decoding still requires loading and querying two LLMs, leading to higher memory requirements than single-model decoding.

\paragraph{Broader Impacts.}
Divergence Decoding may have positive impacts by enabling stronger scientific reasoning systems without additional training or fine-tuning. Since it composes existing specialist and generalist LLMs at inference time, it can reduce the cost of building domain-specific reasoning systems and may support scientific tasks such as chemistry understanding, molecular editing, and expert-level scientific question answering. More broadly, the method provides a practical framework for reusing heterogeneous LLM capabilities and improving reliability through adaptive collaboration.

At the same time, stronger scientific LLM systems may also introduce risks. Users may over-rely on generated reasoning traces or final answers, especially in high-stakes scientific, biomedical, or engineering settings where incorrect conclusions can have real-world consequences. In addition, improved scientific reasoning ability should not be interpreted as a substitute for expert validation. We therefore recommend using Divergence Decoding as an assistive tool rather than an autonomous decision maker, and outputs should be verified by domain experts before being used in safety-critical applications. Future work should further evaluate the robustness, uncertainty calibration, and safety behavior of divergence-based model collaboration before deployment in real-world scientific workflows.




\end{document}